\newcolumntype{x}[1]{!{\centering\arraybackslash\vrule width #1}}
\newcommand{\old}[1]{{\color{teal} #1}}
\renewcommand{\old}[1]{ }
\newcommand{\commentout}[1]{ }
\newcommand{\ie}{i.\,e.\xspace}
\newcommand{\eg}{e.\,g.\xspace}
\newcommand{\K}{\mathbb{K}}
\newcommand{\B}{\mathfrak{B}}
\newcommand{\uB}{\underline{\mathfrak{B}}}
\newcommand{\coveredby}{\prec}
\newcommand{\ncoveredby}{\centernot\prec}
\newcommand{\covers}{\succ}
\newcommand{\uP}{\boldsymbol{P}}
\newcommand{\uL}{\boldsymbol{L}}
\let\subset\subseteq
\let\cref\Cref
\newcommand{\up}{\vee}
\newcommand{\down}{\wedge}
\newcommand{\proofbox}{\hspace*{\fill}$\Box$}
\DeclareAcronym{fca}{
  short = FCA,
  long = Formal Concept Analysis
}
\DeclareAcronym{dm}{
  short = DM,
  long = Dedekind-MacNeille completion
}
\newcommand{\ColoredEdge}[3]{{\color{#3}\Edge{#1}{#2}}}
\newcommand{\ColoredleftObjbox}[5]{{\color{#5}\leftObjbox{#1}{#2}{#3}{#4}}}
\newcommand{\ColoredrightAttbox}[5]{{\color{#5}\rightAttbox{#1}{#2}{#3}{#4}}}
\newcommand{\edgerisethickness}{1pt}
\newcommand{\edgenorisecolor}{gray}
\newcommand{\edgebothrisecolor}{gray}
\newcommand{\edgeobjectrisecolor}{gray}
\newcommand{\edgeattributerisecolor}{gray}
\newcommand{\nodeattributerisecolor}{red}
\newcommand{\nodeobjectrisecolor}{red}
\newcommand{\meetnumberoffset}{-10}
\newcommand{\joinnumberoffset}{-10}
\tikzset{
  pics/stepnumber/.style args={#1/#2/#3/#4}{
      code={
          \draw[black] (0,0) circle (#3);
          \draw[-] (-#3,0) -- (#3,0);
          \node at (0,0.5*#3) {#1};
          \node at (0,-0.5*#3) {#2};
          \node[draw=none, minimum size=1.4*#3] (#4) at (0,0) {};
        }
    }
}
\tikzset{
  pics/stepnumber-color/.style args={#1/#2/#3/#4/#5/#6}{
      code={
          \draw[black] (0,0) circle (#3);
          \fill[#5] (0,0) -- (#3,0) arc[start angle=0, end angle=180, radius=#3] -- cycle;
          \fill[#6] (0,0) -- (#3,0) arc[start angle=0, end angle=-180, radius=#3] -- cycle;
          \draw[-] (-#3,0) -- (#3,0);
          \node at (0,0.5*#3) {#1};
          \node at (0,-0.5*#3) {#2};
          \node[draw=none, minimum size=1.4*#3] (#4) at (0,0) {};
        }
    }
}
\begin{document}

\title{Rises for Measuring Local Distributivity\\ in Lattices}

\author{Mohammad Abdulla\inst{1,2}, Tobias Hille\inst{1,2},\\ Dominik
  Dürrschnabel\inst{1,2}, Gerd Stumme\inst{1,2}}
\institute{Knowledge \& Data Engineering Group, University of Kassel, Germany
  \and Interdisciplinary Research Center for Information Systems Design,
  \mbox{University of Kassel, Germany} \\%
  \email{\{abdulla, hille, duerrschnabel, stumme\}@cs.uni-kassel.de} }
\maketitle

\begin{abstract}

  Distributivity is a well-established and extensively studied notion in lattice theory. In the context of data analysis, particularly within Formal Concept Analysis (FCA), lattices are often observed to exhibit a high degree of distributivity. However, no standardized measure exists to quantify this property. In this paper, we introduce the notion of rises in (concept) lattices as a means to assess distributivity. Rises capture how the number of attributes or objects in covering concepts change within the concept lattice. We show that a lattice is distributive if and only if no non-unit rises occur.
  Furthermore, we relate rises to the classical notion of meet- and join distributivity. We observe that concept lattices from real-world data are to a high degree join-distributive, but much less meet-distributive. We additionally study how join-distributivity manifests on the level of ordered sets.

  \keywords{Posets, Formal Concept Analysis, Distributive Lattices, Local Distributivity/Join-Distributivity, Join-Rise, Meet-Rise}

\end{abstract}

\section{Introduction}
\label{sec:introduction}

In lattice theory, a particular interesting property of a lattice $(L\leq)$ is that of being distributive, \ie,  when, for all
$x,y,z \in L$, $x\down (y\up z)=(x\down y)\up(x\down z)$ and $x\up(y\down z)=(x\up y)\down (x\up z)$ hold.
Distributive lattices are equipped with strong properties that make them particularly useful in data analysis and knowledge representation: They can for instance be embedded into products of total orders, which can be used for a divide-and-conquer approach for data analysis by decompositions or for well readable visualisations; moreover for their implicational theory, only implications with one-element premises --- which are thus easy to understand --- have to be considered.

Previous studies suggest that lattices arising in data-driven contexts, such as
those in Formal Concept Analysis (FCA), exhibit a high degree of distributivity.
Wille~\cite{WilleTruncated2003} provides evidence that many concept lattices
from real-world data sources are `mostly distributive'.
Despite this observation, there is no widely accepted measure to quantify the
degree of distributivity in such lattices --- not to speak of any method for `repairing' the non-distributive part of the lattice.

For data analysis tasks such as decompositions and visualisations, it might also suffice to consider some weaker form of distributivity, such as semidistributivity, local distributivity, modularity, semimodularity, etc. It should be obvious that `the right way' of measuring and repairing non-distributivity (and any other algebraic property) will strongly depend on the goal of the data analysis. Therefore a large tool kit with different measures and methods is required.

In~\cite{abdulla2024birkhoff}, we started this approach by proposing the Birkhoff completion, which embeds any (non-distributive) lattice into the smallest possible distributive one by preserving all joins. In this paper, we follow another observation: (structural properties of) neighboring elements in distributive (concept) lattices vary only slightly.
When we move in a finite distributive lattice from an element to an upper neighbor, the set of join-irreducible elements below increases exactly by one [and the set of meet-irreducible elements above decreases exactly by one]. (In FCA that means that the cardinalities of the extents and intents of two neighboring concepts of a reduced context differ exactly by one.) Hence whenever a rise (which we will formally define in Section \ref{sec-local}) is larger than one, then the distributivity law is violated --- which could be repaired in a subsequent step.

It turned out that we rediscovered with the property `all join rises are equal to one' an old concept which has already frequently been rediscovered before \cite{monjardet1985use} --- the so-called \emph{local distributivity} or \emph{join-distributivity}.  Join-distributivity has initially been introduced by Dilworth in 1940 (see \cite{Dilworth1990}) and has since then been discovered and studied independently from many different perspectives. In particular,  join-distributive lattices are equivalent to anti-matroids, convex geometries \cite{edelman1980meet,edelman1985theory,korte1991abstract} and learning spaces \cite{falmagne2010learning,wild2017compressed}. The equivalences summarized in Theorem~\ref{theorem-joindistributivity} in Section~\ref{sec-local} are both reason and result of this confluence of perspectives.

In this paper, we will study specific aspects of join- and meet-rises and join-/meetdistributivity that may be beneficial for an analysis of (concept) lattices that are `almost join-distributive'. Because of the duality principle of lattice theory,\footnote{Every valid statement also holds in its dual version, \ie when we systematically flip $\leq \,\,\leftrightarrow\,\, \geq$, `meet' $(\wedge)$ $\leftrightarrow$ `join' $(\vee)$, $\top \leftrightarrow \bot$, etc.} it is sufficient to describe our results only for either join-distributivity or meet-distributivity. Because real-world concept lattices are closer to being join-distributive than meet-distributive (see Section~\ref{sec-realworld} for details) and because learning spaces are also joindistributive, we will focus on the former ones. We will use the term `local distributivity' when talking about the overall picture, and the equivalent term `joindistributivity' for details.

As our research is driven by a data analysis perspective, \emph{all sets and structures in this paper will be finite}, unless explicitly mentioned otherwise.

While the work in this paper is mostly of theoretical nature, we will discuss potential applications in the outlook (Sect.~\ref{sec-conclusion}).
In Sect.~\ref{sec-basics}, we will introduce the necessary background on ordered sets, lattices, and Formal Concept Analysis.
In Sect.~3, we will define rises and establish their connection to local distributivity. In Section 4, we will extend the study of join-distributivity to posets. Finally, Sect.~5 will analyze real-world datasets to quantify local distributivity in practice.
Related work is discussed throughout the paper.

\section{Preliminaries}
\label{sec-basics}
Here we briefly recall the basic notations.
For more background on lattice theory and Formal Concept Analysis, we refer to \cite{Birkhoff,Lattices,roman2008lattices,GanterFormal2024}. We follow the notation of \cite{GanterFormal2024} whenever possible.

\subsection{Ordered Sets and Lattices}
An \emph{order} (sometimes also called \emph{partial order}) on a set $P$ is a binary
relation $\leq$ on $P$ that is: reflexive (For all \(a \in P\), \(a \leq a\)),
antisymmetric (If \(a \leq b\) and \(b \leq a\), then \(a = b\)), transitive (If
\(a \leq b\) and \(b \leq c\), then \(a \leq c\)). $\uP \coloneqq (P,{\leq})$ is
then called an \emph{ordered set}.
We call $(x, y)\in P\times P$ a \emph{comparable pair} of P if $x<y$. We call a comparable pair a \emph{covering pair} and write \( x \coveredby y \) if, for all \( p \in P
\), \( x \leq p < y \) implies \( x = p \).
A \emph{chain} in $(P,{\leq})$ is a set $C\subset P$  where, for all $x,y\in C$, $x\leq y$ or $y\leq x$. %
A chain is called \emph{maximal} if it is not a proper subset of any other chain. %
The \emph{height} $h$ of a finite ordered set $P$ is the length of its longest chain.
The \emph{height of an element} $x$ is given by $\max\{|C| \mid \forall y \in C: y < x$, C is a chain$\}$.
For two elements $x, y\in P$,  $[x, y]:=\{z\mid x\leq z\leq y \}$ is the
\emph{interval} between $x$ and $y$.

If, for $x,y\in P$ , the set $\{z \mid z \leq x\textrm{ and } z \leq y\}$ has a largest element, then it is the \emph{meet} of $x$ and $y$, denoted by $x\wedge y$.
Dually, if there is a least element in $\{z\in P \mid z\geq x, z\geq y\}$, then it is the
\emph{join} of $x$ and $y$, denoted by $ x\vee y$.
An ordered set
$\uL\coloneq(L,{\leq})$ is called a \emph{lattice} if for any two elements their
meet and join exist.
It is a \emph{complete lattice} if all
subsets $X\subseteq L$ have meet (denoted by $\bigwedge X$) and join
(denoted by $\bigvee X$).
$\bot:=\bigwedge \emptyset$ is the \emph{bottom element} and  $\top\coloneqq \bigvee \emptyset$ is the \emph{top element}. $\bot$ and $\top$ are often denoted by 0 and 1. We refrain from this notation, as it may lead to confusion with our concept of rises.

$x\in\uP$ is join-irreducible if it is not the join of a set of elements not containing $x$.
It is meet-irreducible if it is not the meet of a set of elements not containing it.
The set of all join-irreducible elements of $\uP$ is denoted by
$\mathcal{J}(\uP)$, and the set of all its meet-irreducibles is denoted by
$\mathcal{M}(\uP)$.
We define $\mathcal{M}^x
    \coloneqq \{m \in \mathcal{M}(\uP) \mid x \leq m\}.$ %
For $x \in P$ let $m(x) = |\mathcal{M}^x|$. %
Dually, let $\mathcal{J}_x=\{j\in\mathcal{J}(\uP)\mid j\leq x\}$ and for $x\in
    P$ let $ j(x)=|\mathcal{J}_x|$. %
$X\subseteq L$ is an \emph{irredundant} $\bigvee$-representation of $a\in L$ if $a=\bigvee X$ and
$a\neq \bigvee (X\setminus{\{x\}})$ for every $x \in X$.
A \emph{rank function} in a lattice $(L,{\leq})$ is a mapping $r\colon L \to \mathbb{N}$ such that $r(\bot)=0$ and, for any two
elements $x,y \in L$,  $x \coveredby y \Rightarrow r(x) + 1
    =r(y)$ holds. %
A lattice with rank function is \emph{graded}.
$K\subseteq L$ is a \emph{sublattice} of lattice $(L,\leq)$ if $K$ is closed under meets and joins.
Distributivity, as defined in the introduction, can be characterized by forbidden
sublattices, as famously discovered by Birkhoff:
\begin{theorem}[Birkhoff~\!\!\cite{Birkhoff}]
    \label{theorem-M3N5}
    A lattice $\uL$ is distributive if and only if it has neither $M_3$ nor $N_5$ as a
    sublattice.
\end{theorem}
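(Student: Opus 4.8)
The plan is to prove both implications, dealing with the easy ``only if'' direction directly and the harder ``if'' direction by contraposition.

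For the ``only if'' direction I would first record the routine fact that distributivity is inherited by sublattices: since a sublattice $K$ of $\uL$ computes its meets and joins exactly as $\uL$ does, the two distributive identities, being universally quantified equations, hold automatically for every triple drawn from $K$. It therefore suffices to check that neither $M_3$ nor $N_5$ is itself distributive. In $M_3$, with atoms $a,b,c$ strictly between $\bot$ and $\top$, one has $a \wedge (b \vee c) = a \wedge \top = a$ while $(a \wedge b)\vee(a \wedge c) = \bot \vee \bot = \bot$. In $N_5$, writing the three incomparable-to-each-other pieces as a chain $p < q$ and a single element $w$ with $p \wedge w = \bot$ and $q \vee w = \top$, one gets $p \vee (q \wedge w) = p \vee \bot = p$ but $(p \vee q)\wedge(p \vee w) = q \wedge \top = q \neq p$. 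Hence a distributive lattice can contain neither as a sublattice.

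For the ``if'' direction I would argue the contrapositive: assuming $\uL$ is \emph{not} distributive, I would exhibit a copy of $M_3$ or of $N_5$. The cleanest route is the classical two-step reduction through modularity, the modular law being $x \leq z \Rightarrow x \vee (y \wedge z) = (x \vee y)\wedge z$. \textbf{Step 1:} if $\uL$ is not modular, it contains $N_5$. Pick $a \leq c$ witnessing the failure, so that $p := a \vee (b \wedge c)$ is strictly below $q := (a \vee b)\wedge c$ (the inequality $p \leq q$ always holds under $a \leq c$). I would then verify that $\{b \wedge c,\, p,\, q,\, b,\, a \vee b\}$ is closed under $\wedge$ and $\vee$ and realizes the pentagon, with $b$ the element incomparable to the chain $p < q$; the identities $b \wedge p = b \wedge q = b \wedge c$ and $b \vee p = b \vee q = a \vee b$ (together with $p \leq c$ and $q \geq a$) are exactly what must be checked. \textbf{Step 2:} if $\uL$ is modular but still not distributive, it contains $M_3$. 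Taking $a,b,c$ that violate distributivity, I would form the two ``medians'' $d := (a\wedge b)\vee(b\wedge c)\vee(c\wedge a)$ and $e := (a\vee b)\wedge(b\vee c)\wedge(c\vee a)$, set $a' := (a \wedge e)\vee d$ and the symmetric $b',c'$, and use the modular law repeatedly to show that $a',b',c'$ are pairwise incomparable with common meet $d$ and common join $e$, yielding a diamond.

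The main obstacle in both steps is the same bookkeeping: confirming that the five chosen elements are genuinely \emph{distinct} --- the non-degeneracy arguments are precisely where the failure of modularity, respectively distributivity, is consumed --- and that the candidate set is closed under both operations, so that it is a sublattice and not merely a subposet. Step 2 is the more delicate of the two, since the diamond identities for the medians rely on nontrivial manipulations that are valid only under the modular law secured in Step 1; establishing $a' \wedge b' = d$ and $a' \vee b' = e$ is the crux of the whole argument.
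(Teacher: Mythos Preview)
The paper does not actually prove this theorem: it is stated in the preliminaries as a classical result and attributed to Birkhoff via the citation~\cite{Birkhoff}, with no accompanying argument. So there is nothing to compare against on the paper's side.

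Your proposal is the standard textbook proof (essentially the one in Birkhoff's and Gr\"atzer's books): show directly that $M_3$ and $N_5$ fail distributivity and that distributivity passes to sublattices; then, for the converse, go through modularity, building an $N_5$ from a failure of the modular law and an $M_3$ from the two medians when the lattice is modular but not distributive. The outline is correct, and you have identified exactly where the work lies --- verifying closure under $\wedge,\vee$ and, more importantly, that all five elements in each construction are genuinely distinct (this is indeed where the assumed failure of modularity, resp.\ distributivity, gets used). Nothing is missing at the level of strategy; what remains is the routine but somewhat lengthy case-checking you flag yourself.
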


A lattice is called \emph{modular} if for all $x, y, z\in L$
with $z\leq x$ it follows that $x\wedge (y\vee z)=(x\wedge y)\vee z$.
It is well-known that distributive lattices are modular. %
Modularity can be weakened to \emph{semi-modularity} as follows. A finite
lattice $\uL$ is \emph{semimodular} if, for all $x,y \in L$, $x \wedge y
    \coveredby y$ implies $ x \coveredby x \vee y$, and it is \emph{dually semimodular} if
$y \coveredby x \vee y$ implies $ x \wedge y \coveredby x$. The
lattice \( S_7 \) (see~\cref{fig:2x2grid} (c)) is the smallest semimodular lattice which is not modular. %

\begin{theorem}[\!\!\cite{roman2008lattices}]
    A finite lattice $\uL$ is modular if and only if it is both semimodular and
    dually semimodular.
\end{theorem}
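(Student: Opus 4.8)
The plan is to prove the two implications separately: the forward direction rests on the classical diamond isomorphism of modular lattices, while the backward direction is carried out through the rank function, where all the real work lies.

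For the forward direction (modular $\Rightarrow$ semimodular and dually semimodular), I would use that in a modular lattice the maps $t \mapsto t \vee x$ and $s \mapsto s \wedge y$ are mutually inverse order-isomorphisms between the intervals $[x \wedge y, y]$ and $[x, x \vee y]$; that they are inverse is a one-line double application of the modular law in each direction. Granting this, if $x \wedge y \coveredby y$ then $[x \wedge y, y]$ is the two-element chain $\{x \wedge y, y\}$, hence so is its isomorphic image $[x, x \vee y]$, which is exactly $x \coveredby x \vee y$; this is semimodularity. Since modularity is self-dual (and semimodularity and dual semimodularity are dual to each other), the same statement applied to the dual lattice immediately yields dual semimodularity.

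The backward direction is the substantial part. First I would prove the propagation lemma for a finite semimodular lattice: if $a \coveredby b$, $a \leq c$ and $b \not\leq c$, then $a \leq b \wedge c \leq b$ with $b \wedge c \neq b$, so the cover $a \coveredby b$ forces $b \wedge c = a$; semimodularity applied to the pair $(c,b)$ then gives $c \coveredby b \vee c$. Iterating this lemma lets one lift a maximal chain from $x \wedge y$ up to $x$ onto the chain from $y$ to $x \vee y$ obtained by joining each element with $y$, where each step is either a cover or a collapse. This simultaneously shows that all maximal chains between comparable elements have equal length -- so $\uL$ is graded with a rank function $r$ -- and that $r$ is submodular, $r(x) + r(y) \geq r(x \vee y) + r(x \wedge y)$. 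Dually, dual semimodularity makes $\uL$ graded with $r$ supermodular. Since a finite graded lattice has a \emph{unique} rank function, both inequalities concern the same $r$ and combine into the modular rank identity $r(x) + r(y) = r(x \vee y) + r(x \wedge y)$.

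It remains to recover lattice modularity from this identity. Fix $z \leq x$; in any lattice $(x \wedge y) \vee z \leq x \wedge (y \vee z)$, so by gradedness it suffices to show these have equal rank. Writing $p = (x \wedge y) \vee z$ and $q = x \wedge (y \vee z)$, I would apply the rank identity to the pairs $(x \wedge y, z)$, $(q, y)$ and $(y, z)$, using the elementary equalities $q \wedge y = x \wedge y$, $q \vee y = y \vee z$, and $x \wedge y \wedge z = y \wedge z$ (the last since $z \leq x$); a short substitution gives $r(p) = r(x \wedge y) + r(z) - r(y \wedge z) = r(q)$, whence $p = q$, which is the modular law. The step I expect to be the genuine obstacle is the propagation lemma together with its two consequences, namely that semimodularity forces gradedness and submodularity of $r$ -- a Jordan--Hölder-type exchange argument in which all the content of the theorem is concentrated; by contrast the forward direction and the final rank computation are essentially formal.
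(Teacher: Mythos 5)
The paper does not prove this statement at all --- it is quoted from the literature (Roman's \emph{Lattices and Ordered Sets}) as a known result, so there is no in-paper proof to compare against. Judged on its own, your outline is correct and is in fact the classical textbook argument. The forward direction via the diamond isomorphism $t \mapsto t \vee x$, $s \mapsto s \wedge y$ between $[x \wedge y, y]$ and $[x, x \vee y]$ is sound, and self-duality of the modular law does give dual semimodularity for free. For the converse, your propagation lemma is stated correctly ($a \prec b$, $a \leq c$, $b \not\leq c$ force $b \wedge c = a$, whence $c \prec b \vee c$ by semimodularity), and it is indeed the engine of both the Jordan--Hölder gradedness argument and the chain-lifting proof of submodularity of $r$; the dual hypothesis gives supermodularity, and uniqueness of the rank function (with the paper's normalization $r(\bot)=0$, the rank is forced to be the common length of maximal chains from $\bot$) legitimately merges the two inequalities into $r(x)+r(y)=r(x\vee y)+r(x\wedge y)$. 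Your final computation also checks out: with $p=(x\wedge y)\vee z$, $q=x\wedge(y\vee z)$, $z \leq x$, the identities $q\wedge y = x\wedge y$, $q \vee y = y \vee z$ (using $z \leq q$) and $(x\wedge y)\wedge z = y\wedge z$ give $r(p)=r(x\wedge y)+r(z)-r(y\wedge z)=r(q)$, and $p \leq q$ with equal rank forces $p=q$ in a graded lattice. The only part left schematic is the Jordan--Hölder induction establishing gradedness, which you flag explicitly; it is standard and follows from your propagation lemma by induction on chain length, so this is an acknowledged routine gap rather than a flaw.
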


A complete lattice $\uL$ is called \emph{join-distributive} if it is semimodular and
every modular sublattice is distributive (see Definition~\ref{def-joindistributive}). Lattices satisfying the dual condition
are called \emph{meet-distributive}.
For these two properties, there are multiple other possible definitions, as listed below in \cref{theorem-joindistributivity}.

\subsection{Formal Contexts and Concept Lattices}
In this section, we introduce the fundamental definitions and concepts from
Formal Concept Analysis (FCA)  used throughout
this work.
A \emph{(formal) context} is a triple \( \mathbb{K} = (G, M, I) \), where \( G \) and $M$ are sets whose elements are called \emph{objects} and \emph{attributes}, resp., and \( I
\subseteq G \times M \) is called \emph{incidence relation}.
For \( A \subseteq G \), we define $A' \coloneqq \{
    m \in M \mid \forall g \in A~(g, m) \in I \}$. %
Dually, for $ B \subseteq M$, we define $B' \coloneqq \{ g \in G \mid \forall m \in
    B~(g, m) \in I \}$. %
A \emph{(formal) concept} of \( \mathbb{K} \) is a pair $(A, B)$ such that
$A\subseteq G$, $B\subseteq M$, $A' = B$, and $B' = A$. \( A \) is the \emph{extent} and \( B \) the \emph{intent} of the concept. %
The set of all concepts of \( \mathbb{K} \), ordered by $ (A_1, B_1) \leq
    (A_2, B_2) :\!\iff A_1 \subseteq A_2 $, forms a complete lattice, the
\emph{concept lattice} of \( \mathbb{K} \), denoted by \( \uB(\mathbb{K}) \). A
context  is \emph{clarified} if, for \( g, h \in G \),  \( g' = h' \) implies \( g = h \), and if for  \( m, n \in M \), \( m' = n' \) implies \( m
= n \).
For  $g\in G$,  $(g'',g')$ is an \emph{object concept} and, for $m\in M$, $(m',m'')$ is an \emph{attribute concept}.
A context is  \emph{object-reduced} if every object concept is join-irreducible, it is \emph{attribute-reduced} if every attribute concept is meet-irreducible, and it is \emph{reduced} if it is both. %
For $g \in G$ and $m \in M$ with $(g,m)\notin I$, we write $g \DownArrow m$ if, for $h\in G$, $g' \subsetneq h'$ implies $(h,m) \in I$, and
$g \UpArrow m$ if, for $n\in M$, $m' \subsetneq n'$ implies $(g,n)\in I$. $g \DoubleArrow m$ stands for both $g \DownArrow m$ and $g \UpArrow m$.
The \emph{\acl{dm}} of an ordered set $\uP$, denoted by $DM(\uP)$, is the smallest complete lattice into which $\uP$
can be order embedded: %

\begin{theorem}[FCA version of Dedekind’s Completion Theorem \cite{GanterFormal2024}]
    \label{theorem:dct}
    For an ordered set $\uP$, define $\iota(x) \coloneqq (\{a\in P \mid a\leq x\}, \{a\in P \mid a\geq x\})$ for $x \in
        P$. %
    This defines an embedding $\iota$ of $\uP$ in $\uB(P,P,\leq)$. %
    Moreover, $\iota(\bigvee X) = \bigvee\iota(X)$ or $\iota(\bigwedge X) =
        \bigwedge \iota(X)$ if the join or meet of $X$, respectively, exists in
    $\uP$. Furthermore, if \( \kappa \) is any order embedding of $\uP$ into a
    complete lattice $\uL$, then there exists an embedding \( \lambda \) of \(
    \uB(P, P, \leq) \) into $\uL$ such that $\kappa = \lambda \circ \iota.$

\end{theorem}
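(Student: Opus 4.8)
The plan is to break the statement into four parts and dispatch them in increasing order of difficulty: (1) that each $\iota(x)$ is genuinely a concept of $(P,P,\leq)$; (2) that $\iota$ is an order embedding; (3) that $\iota$ preserves those joins and meets that happen to exist in $\uP$; and (4) the universal property. Throughout I would unfold the derivation operators of the context $(P,P,\leq)$: for $A\subseteq P$ the set $A'$ is exactly the set of upper bounds of $A$, and for $B\subseteq P$ the set $B'$ is the set of lower bounds of $B$.

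For part (1), I would check directly that $(\downset x)' = \upset x$ and $(\upset x)' = \downset x$. Since $x$ is the greatest element of $\downset x$, an element $b$ is an upper bound of $\downset x$ if and only if $x\leq b$, which gives $(\downset x)' = \upset x$; the dual computation gives $(\upset x)' = \downset x$, so $\iota(x)=(\downset x,\upset x)$ is a concept. For part (2), because concepts are ordered by extent inclusion, $\iota(x)\leq\iota(y)$ means $\downset x\subseteq\downset y$, and since $x\in\downset x$ this is equivalent to $x\leq y$; this simultaneously yields order-preservation, order-reflection, and (via antisymmetry) injectivity.

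For part (3), I would compute the join $\bigvee\iota(X)$ in the concept lattice through its intent, which is $\bigcap_{x\in X}\upset x$, i.e.\ the set of common upper bounds of $X$. If $s=\bigvee X$ exists in $\uP$, then being the least upper bound it satisfies $\upset s = \bigcap_{x\in X}\upset x$, so the intents --- and hence the concepts --- coincide, giving $\iota(\bigvee X)=\bigvee\iota(X)$. The meet statement is dual.

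The real work is part (4). Given an order embedding $\kappa$ of $\uP$ into a complete lattice $\uL$, I would define $\lambda((A,B)) \coloneqq \bigvee_{a\in A}\kappa(a)$. That $\kappa=\lambda\circ\iota$ is immediate, since $\lambda(\iota(x))=\bigvee_{a\leq x}\kappa(a)=\kappa(x)$ using that $x$ is the largest element of $\downset x$ and $\kappa$ is order-preserving; and $\lambda$ is clearly order-preserving because $A_1\subseteq A_2$ forces $\bigvee\kappa(A_1)\leq\bigvee\kappa(A_2)$. The hard part will be showing that $\lambda$ reflects order (equivalently, is an embedding). The decisive observation is a sandwich inequality: every $b\in B$ is an upper bound of $A$, so $\kappa(b)\geq\kappa(a)$ for all $a\in A$, whence $\bigvee_{a\in A}\kappa(a)\leq\bigwedge_{b\in B}\kappa(b)$. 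Now suppose $\lambda((A_1,B_1))\leq\lambda((A_2,B_2))$ and take $p\in A_1$; then $\kappa(p)\leq\bigvee\kappa(A_1)\leq\bigvee\kappa(A_2)\leq\bigwedge_{b\in B_2}\kappa(b)$, so $\kappa(p)\leq\kappa(b)$ for every $b\in B_2$. Because $\kappa$ reflects order, $p\leq b$ for all $b\in B_2$, so $p$ is a lower bound of $B_2$, i.e.\ $p\in B_2' = A_2$. Hence $A_1\subseteq A_2$, which is exactly $(A_1,B_1)\leq(A_2,B_2)$, completing the embedding argument. I expect this order-reflection step to be the main obstacle, as it is the only place where the closure condition $A=B'$ defining concepts (as opposed to arbitrary down-sets) is essential.
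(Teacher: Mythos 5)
The paper does not prove this theorem at all---it is quoted from the literature with a citation to \cite{GanterFormal2024}---so there is no in-paper proof to compare against; your argument must stand on its own, and it does. All four parts are correct and constitute essentially the standard textbook proof: the computation $(\{a \mid a\leq x\})' = \{a \mid a\geq x\}$ and its dual establish that $\iota(x)$ is a concept, the extent-inclusion order gives the embedding property, the intent of $\bigvee\iota(X)$ being the set of common upper bounds of $X$ gives preservation of existing joins (dually for meets), and for the universal property the definition $\lambda((A,B))=\bigvee_{a\in A}\kappa(a)$, the sandwich inequality $\bigvee_{a\in A}\kappa(a)\leq\bigwedge_{b\in B}\kappa(b)$, and the use of the closure condition $B_2'=A_2$ to obtain order-reflection are exactly the right (and standard) ingredients.
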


\section{Rises and Local Distributivity}
\label{sec-local}

Recall, that the mappings $j$ and $m$ count the number of join-irreducibles and meet-irreducibles below and above an element in a poset, respectively.

\begin{definition}
  Let $\uL:=(L,\leq)$ be a finite lattice.
  The \emph{join-rise} of a comparable pair $(x, y)$ of $\uL$ is denoted by
  $\Delta_j(x,y) \coloneqq j(y)-j(x)$ and the \emph{meet-rise} by
  $\Delta_m(x,y) \coloneqq m(x)-m(y)$.
  We call a rise between covering elements \emph{unit} if its value
  is one, otherwise we call it \emph{non-unit}.
\end{definition}

We will measure these rises mostly for covering pairs  $x\prec y$.

If we already have a clarified and reduced context $\K$, the maps $j$ and $m$ on $\B(K)$ measure the size of the extent and the intent respectively, i.e., $j(A,B)=|\mathcal{J}_{(A,B)}|=|A|$ and $m(A,B)=|\mathcal{M}_{(A,B)}|=|B|$. This observation gives rise to the following lemma.

\begin{lemma}\label{lem:abuse}
  For two formal concepts $(A,B) \leq (C,D)$ of a reduced and clarified context
  $\Delta_j((A,B),(C,D)) = |C| - |A|$ and $\Delta_m((A,B),(C,D)) = |B| - |D|$ hold.
\end{lemma}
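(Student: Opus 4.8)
The plan is to reduce the statement to the observation recorded just above it, that in a reduced and clarified context the counting functions $j$ and $m$ return the sizes of extent and intent: $j(A,B)=|A|$ and $m(A,B)=|B|$. Granting this, the lemma is pure substitution into the definitions of the rises, namely $\Delta_j((A,B),(C,D))=j(C,D)-j(A,B)=|C|-|A|$ and $\Delta_m((A,B),(C,D))=m(A,B)-m(C,D)=|B|-|D|$. So the actual content lies in justifying the two identities $j(A,B)=|A|$ and $m(A,B)=|B|$.

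For $j(A,B)=|A|$ I would first use the standard dictionary for concept lattices: in any finite context every concept is the join of the object concepts below it, so every join-irreducible of $\B(\K)$ is an object concept $(g'',g')$; since $\K$ is object-reduced, conversely every object concept is join-irreducible, and since $\K$ is clarified the assignment $g\mapsto(g'',g')$ is injective. Next I would identify which of these lie below a fixed concept $(A,B)$: by the order of the concept lattice, $(g'',g')\leq(A,B)$ iff $g''\subseteq A$, and because $g\in g''$ and $A=A''$ this is equivalent to $g\in A$. Hence $\mathcal{J}_{(A,B)}=\{(g'',g')\mid g\in A\}$, and injectivity of $g\mapsto(g'',g')$ gives $j(A,B)=|\mathcal{J}_{(A,B)}|=|A|$. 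The identity $m(A,B)=|B|$ follows dually, replacing object concepts, extents, and join-irreducibles by attribute concepts, intents, and meet-irreducibles.

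The one point that actually has to be checked --- and hence the main, if minor, obstacle --- is the equivalence $(g'',g')\leq(A,B)\iff g\in A$, which relies precisely on the closure facts $g\in g''$ and $A=A''$; all remaining steps are the reducedness and clarification dictionary together with the substitution. Because both identities are symmetric under the duality between objects and attributes, proving one of them and invoking the duality principle suffices.
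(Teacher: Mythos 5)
Your proposal is correct and follows exactly the route the paper intends: the paper states no explicit proof, treating the lemma as an immediate consequence of the preceding observation that $j(A,B)=|A|$ and $m(A,B)=|B|$ in a reduced and clarified context, followed by substitution into the definitions of $\Delta_j$ and $\Delta_m$. Your write-up simply fills in the justification of that observation (join-irreducibles are exactly the object concepts, clarification gives injectivity of $g\mapsto(g'',g')$, and $(g'',g')\leq(A,B)\iff g\in A$), which is standard FCA and correct.
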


Rises capture how concept extents or intents change when we move up
or down in the lattice. For some initial examples, we refer the reader to
\cref{fig:2x2grid}, which shows six example contexts and the corresponding
concept lattices. %
The concepts are labels with the cardinalities of their extent (below) and of their intent (above). Non-unit rises between covering concepts are highlighted. We see that in the distributive lattice $C_2$ all rises are unit, whereas the other three lattices --- which are all non-distributive --- all have at least one non-unit rise. Furthermore, $S_7$ has some non-unit join-rises, while all its meet-rises are unit. (We will see later in  Theorem~\ref{theorem:s7-sublattice} that the existence of $S_7$ of sublattice is also a necessary condition for this situation.)

\begin{figure}[t]
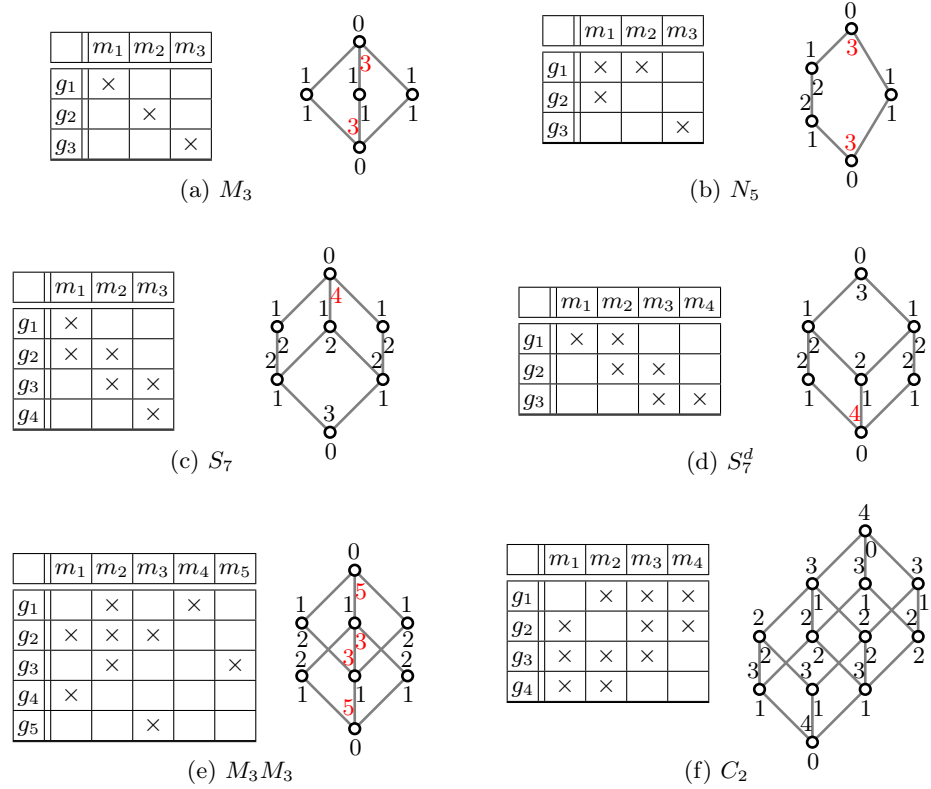

  \centering%
  \subfloat[$M_3$]{%
    \hspace{.5cm}%
    \input{ctx/m3}%
    \hspace{0.8cm}%
    \raisebox{-0.6cm}{\input{figs/m3-rise}}%
    \label{fig-m3}
  }%
  \hfill %
  \subfloat[$N_5$]{%
    \input{ctx/n5}%
    \hspace{.8cm}%
    \raisebox{-1cm}{\input{figs/n5-rise}}%
    \label{fig-n5}
    \hspace{0.5cm}%
  }%

  \bigskip
  \bigskip
  \subfloat[$S_7$]{%
    \input{ctx/s7d}
    \hspace{.8cm}%
    \raisebox{-1.0cm}{\input{figs/s7-rise}}%
    \label{fig-ms7}
  }%
  \hfill %
  \subfloat[$S_7^d$]{%
    \input{ctx/s7}%
    \hspace{0.7cm}%
    \raisebox{-1cm}{\input{figs/s7-dual-rise}}%
    \label{fig:s7-dual-rise}
  }%

  \bigskip
  \bigskip
  \centering%
  \subfloat[$M_3M_3$]{%
    \input{ctx/m3m3}%
    \hspace{1.2cm}%
    \raisebox{-1cm}{\input{figs/m3-m3-rise}}%
    \label{fig:m3-m3-rise}
  }%
  \hfill %
  \subfloat[$C_2$]{%
    \input{ctx/c2}%
    \hspace{0.1cm}%
    \label{fig-c2}
    \raisebox{-1.5cm}{\input{figs/c2-rise}}%
  }%
  \caption{%
    Six small reduced and clarified contexts and their concept lattices. %
    The lower label (object) of a concept node is the number of join
    irreducible elements in the down-set, and correspondingly the upper label
    (attribute) is the number of meet irreducible elements in the up-set. %
    Non-unit rises (see \Cref{lem:abuse}) between covering concepts are
    indicated by a colored label. %
  }

  \label{fig:2x2grid}%
\end{figure}

When studying the behavior of non-unit rises, we quickly recognized that the absence of non-unit rises is just another way of looking at a  well-known (and, as said before, frequently re-discovered) concept:

\begin{definition}
  \label{def-joindistributive}
  A finite lattice $\uL$ is \emph{locally distributive} or \emph{join-distributive} (and sometimes also called \emph{upper locally distributive (ULD)} in the literature) if $\uL$ is semimodular and every modular sublattice is distributive. Lattices satisfying the dual condition are called \emph{meet-distributive}.
\end{definition}

The following theorem provides different views on this --- somewhat complexly defined --- concept. It summarizes many well-known results that are stated (sometimes in the dual version), \eg, in \cite{Dilworth1990,monjardet1985use,edelman1985theory,edelman1980meet,korte1991abstract,SternFrom1999,muehle2023meet,GanterFormal2024}. For sake of simplicity and uniformity, we consider finite lattices only.\footnote{For some of the equivalences in the following theorem, the finiteness condition can be relaxed, \eg, to the finite chain condition or to doubly founded lattices.} As always in lattice theory, the dual theorem also holds.

\begin{theorem}
  \label{theorem-joindistributivity}
  Let $\uL$ be a finite lattice. Then the following are equivalent:
  \begin{enumerate}
    \item $\uL$ is join-distributive.
    \item $\uL$ has a neighborhood-preserving $\bigvee$-embedding into a powerset lattice.
    \item Every element has a unique irredundant $\bigwedge$-representation.
    \item For all $x,y\in L$ with $x\prec y$ there exists a unique meet-irreducible $z$ such that $x\leq z$ and $y\not\leq z$.
    \item $\uL$ is graded with the rank function $r(x)=|\{m\in\mathcal{M}(\uL) \mid m\not\geq x\}|$.
    \item For every element $x\in L$, the interval $[x, x^\top]$ with
          $x^\top:=\bigvee\{y\in L \mid x\prec y\}$ is Boolean.
  \end{enumerate}
  If $\uL = \uB(G,M,I)$, then the following conditions are equivalent to those above:
  \begin{enumerate}
    \setcounter{enumi}{6}
    \item For every concept intent $B$ and for all $m,n\in M\setminus B$ with $m'\neq n'$ we have the \emph{anti-exchange axiom}: $m\in (B\cup\{n\})''$ implies $n\notin (B\cup\{m\})''$.
    \item Every intent $B$ is the closure of its extremal points, where $m\in M$ is an \emph{extremal point} of $B$ if $m\in B$ but $m\notin(B\setminus\{n\in M \mid m' = n'\})''$.
    \item If $m$ and $n$ are irreducible attributes then $g\swarrow m$ and $g\swarrow n$ imply $m'=n'$.
  \end{enumerate}
\end{theorem}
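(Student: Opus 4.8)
The plan is to take the uniqueness-of-separating-meet-irreducible condition~(4) as the hub, since it is the direct translation of ``every meet-rise $\Delta_m(x,y)$ over a covering pair equals one,'' and to route the remaining equivalences through it. The main device is the canonical map $\eta\colon L\to 2^{\mathcal{M}(\uL)}$, $\eta(x):=\{m\in\mathcal{M}(\uL)\mid m\not\geq x\}$. Using that every element of a finite lattice satisfies $x=\bigwedge\mathcal{M}^x$, one checks routinely that $\eta$ is always an injective, join-preserving order-embedding into the powerset, \ie $\eta(\bigvee X)=\bigcup\eta(X)$ and $\eta(x)\subseteq\eta(y)\iff x\leq y$, with $\eta(\bot)=\emptyset$. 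For a covering pair $x\prec y$ the difference $\eta(y)\setminus\eta(x)$ is exactly the set $\{m\in\mathcal{M}(\uL)\mid x\leq m,\ y\not\leq m\}$ appearing in~(4). Hence $\eta$ is neighborhood-preserving precisely when each such set is a singleton, which gives (4)$\Rightarrow$(2) with $\eta$ as the witnessing embedding; reading $|\eta(x)|=|\{m\mid m\not\geq x\}|$ as a rank gives (4)$\Leftrightarrow$(5) directly, since ``graded with this $r$'' and ``every separator set has size one'' both say $\Delta_m(x,y)=1$ on all covers (the requirement $r(\bot)=0$ being automatic). Conversely, a neighborhood-preserving $\bigvee$-embedding makes $\uL$ graded by image cardinality, and comparing with $\eta$ identifies this rank with $|\{m\mid m\not\geq x\}|$, closing the loop (2)$\Leftrightarrow$(4)$\Leftrightarrow$(5).

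Next I would connect the intrinsic conditions (1),(3),(6) to this hub. For (6)$\Leftrightarrow$(4): the upper covers of $x$ are the atoms of the interval $[x,x^\top]$, and~(4) assigns to each cover a private meet-irreducible; together with semimodularity this forces distinct covers to join freely, so $[x,x^\top]\cong 2^{\{y\mid x\prec y\}}$ is Boolean, while a Boolean such interval returns exactly one separator per cover. For (3)$\Leftrightarrow$(4) I would invoke Dilworth's analysis of irredundant meet-decompositions: under~(4) the meet-irreducibles needed to represent an element are forced step by step along a maximal chain, yielding uniqueness of the irredundant $\bigwedge$-representation, whereas a second irredundant representation would produce a covering pair separated by two distinct meet-irreducibles. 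The delicate link is the abstract definition~(1): here I would use that a finite semimodular lattice is graded by height, prove (1)$\Rightarrow$(6) by showing that a non-Boolean interval $[x,x^\top]$ must contain a copy of $M_3$, contradicting ``every modular sublattice is distributive,'' and prove (6)$\Rightarrow$(1) by deriving semimodularity from local Booleanness and excluding $M_3$ sublattices the same way, invoking Theorem~\ref{theorem-M3N5}.

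For the three FCA conditions I would switch to the standard dictionary for $\uL=\uB(G,M,I)$ over a reduced context: the meet-irreducibles of $\uL$ are exactly the attribute concepts $(m',m'')$ of irreducible attributes, the relevant closure operator is $B\mapsto B''$, and the arrow relation $\swarrow$ encodes the minimal failures of incidence along covers. Under this dictionary condition~(9) is the verbatim reformulation of~(4) --- a cover is separated by a unique irreducible attribute class --- so (4)$\Leftrightarrow$(9) is essentially a rewriting. I would then obtain (9)$\Leftrightarrow$(7) and (7)$\Leftrightarrow$(8) from the classical theory of convex geometries: the anti-exchange axiom~(7) is precisely the property making the intent closure system a convex geometry, and~(8) is its equivalent ``closure of the extremal points'' formulation; both are standard and citable from the references collected before the theorem.

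The main obstacle is the bridge to the abstract definition~(1). Everything on the hub (2),(4),(5) and the decomposition/interval conditions (3),(6) is bookkeeping about meet-irreducibles and covers handled by $\eta$, but tying ``semimodular and every modular sublattice is distributive'' to ``every cover has a unique separating meet-irreducible'' requires the two genuinely lattice-theoretic inputs: that the lattices in play are graded by height, and that a covering pair with two separating meet-irreducibles --- equivalently a non-Boolean $[x,x^\top]$ --- necessarily forces an $M_3$ obstruction. Carrying out that $M_3$ extraction, \ie exhibiting three mutually incomparable elements with common meet $x$ and a common upper bound inside $[x,x^\top]$, is the step I expect to demand the most care; once it is in place, Theorem~\ref{theorem-M3N5} closes the argument.
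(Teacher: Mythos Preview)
The paper does not prove this theorem. It is presented as a compendium of known characterisations, introduced by the sentence ``It summarizes many well-known results that are stated (sometimes in the dual version), \eg, in \cite{Dilworth1990,monjardet1985use,edelman1985theory,edelman1980meet,korte1991abstract,SternFrom1999,muehle2023meet,GanterFormal2024}'', and no proof follows; the paper's own contribution is the subsequent Theorem~\ref{theorem-our-joindistributivity}, which appends conditions~(10) and~(11) and proves only those. So there is nothing in the paper to compare your sketch against.

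On its own merits your outline is sensible --- routing everything through~(4) via the map $\eta(x)=\{m\in\mathcal{M}(\uL)\mid m\not\geq x\}$ is exactly the right device --- but two steps are thinner than you acknowledge. First, your (2)$\Rightarrow$(4) (``comparing with $\eta$ identifies this rank'') does not go through as stated: an arbitrary neighborhood-preserving $\bigvee$-embedding $\varphi$ only tells you that $\uL$ is graded, and gradedness alone does not force $\eta$ to increase by one on covers (witness $M_3$, which is graded but fails~(4)). You need an extra argument: for each $s$ in the target powerset, the set $\{x\mid s\notin\varphi(x)\}$ is join-closed, its maximum $m_s$ is meet-irreducible (any two upper covers of $m_s$ would have the same $\varphi$-image), and for a cover $x\prec y$ with $\varphi(y)\setminus\varphi(x)=\{s\}$ the element $m_s$ is the \emph{unique} separating meet-irreducible. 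Second, in (4)$\Rightarrow$(6) you invoke semimodularity before you have it; it does follow from~(4) --- if $x\wedge y\prec x$ then $\eta(x\vee y)=\eta(x)\cup\eta(y)=\eta(y)\cup\{m\}$ for the unique separator $m$, whence $y\prec x\vee y$ --- but this must be established explicitly before the Boolean-interval argument can start. The $M_3$-extraction you flag for (1)$\Leftrightarrow$(6) is indeed where the genuine lattice theory lives, and is precisely the content of the cited sources.
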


We can now connect our join-rises to this theorem.
\begin{theorem}
  \label{theorem-our-joindistributivity}
  Let $\uL$ be a finite lattice. Then the following condition is equivalent to those stated in Theorem~\ref{theorem-joindistributivity}:
  \begin{enumerate}
    \setcounter{enumi}{9}
    \item For all $x,y\in L$ with $x\prec y$ it holds $\Delta_m(x,y)=1$.
  \end{enumerate}
  If $\uL = \uB(G,M,I)$ and $(G,M,I)$ is clarified and reduced, then the following condition is equivalent to those above:
  \begin{enumerate}
    \setcounter{enumi}{10}
    \item For all concepts $(A,B),(C,D)\in\B(G,M,I)$ with $(A,B)\prec (C,D)$ it holds $|B|-|D|=1$
  \end{enumerate}
\end{theorem}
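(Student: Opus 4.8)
The plan is to show that the new condition~(10) is literally a recasting of condition~(4) of Theorem~\ref{theorem-joindistributivity} --- that every covering pair $x\prec y$ admits a \emph{unique} meet-irreducible $z$ with $x\leq z$ and $y\not\leq z$ --- once the meet-rise is rewritten as a cardinality of meet-irreducibles. First I would record the order-reversing inclusion: for any comparable pair $x\leq y$ we have $\mathcal{M}^y\subseteq\mathcal{M}^x$, since $y\leq m$ together with $x\leq y$ yields $x\leq m$. Hence
\[
  \Delta_m(x,y)=m(x)-m(y)=|\mathcal{M}^x|-|\mathcal{M}^y|=|\mathcal{M}^x\setminus\mathcal{M}^y|=|\{m\in\mathcal{M}(\uL)\mid x\leq m,\ y\not\leq m\}|,
\]
so the meet-rise counts exactly the meet-irreducibles that \emph{separate} $x$ from $y$, \ie those lying above $x$ but not above $y$.

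Next I would note that this separating set is never empty for a strict pair. In a finite lattice every element is the meet of the meet-irreducibles above it, so $x=\bigwedge\mathcal{M}^x$; thus $x<y$ forces $\mathcal{M}^x\neq\mathcal{M}^y$, and with the inclusion above we get $\mathcal{M}^x\supsetneq\mathcal{M}^y$, whence $\Delta_m(x,y)\geq 1$. With this the equivalence is immediate: for a covering pair $x\prec y$, condition~(4) asserts that the separating set has exactly one element, which by the displayed identity means precisely $\Delta_m(x,y)=1$, \ie condition~(10). Since existence is automatic, the word `unique' in~(4) and the numerical value `one' in~(10) say the same thing, giving (4)~$\Leftrightarrow$~(10) and thereby attaching~(10) to the chain of Theorem~\ref{theorem-joindistributivity}.

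For the formal-context statement, condition~(11) is obtained from~(10) purely by translation. By Lemma~\ref{lem:abuse}, in a clarified and reduced context the meet-rise of $(A,B)\leq(C,D)$ equals $|B|-|D|$; hence, restricting to $\uL=\uB(G,M,I)$, the demand $\Delta_m((A,B),(C,D))=1$ for all covering concepts is verbatim the demand $|B|-|D|=1$. This yields (10)~$\Leftrightarrow$~(11) on concept lattices of reduced clarified contexts.

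I do not expect a serious obstacle; the entire content is the reformulation of $\Delta_m$ as a count of separating meet-irreducibles. The only points that need care are the inclusion $\mathcal{M}^y\subseteq\mathcal{M}^x$ and the non-emptiness of the separating set for strict pairs, because it is exactly this non-emptiness that makes the \emph{uniqueness} in condition~(4) correspond to the value `one' rather than to `at most one'. As a cross-check one could instead route through condition~(5): since $r(x)=|\mathcal{M}(\uL)|-m(x)$ gives $r(y)-r(x)=\Delta_m(x,y)$ and $r(\bot)=0$ holds automatically, the map $r$ is a rank function precisely when every covering pair has meet-rise one, which is again condition~(10).
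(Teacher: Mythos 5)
Your proposal is correct and follows essentially the same route as the paper: the equivalence (4)\,$\Leftrightarrow$\,(10) by reading $\Delta_m(x,y)$ as the number of meet-irreducibles above $x$ but not above $y$, and (10)\,$\Leftrightarrow$\,(11) via Lemma~\ref{lem:abuse}. The paper states the first equivalence in a single sentence, whereas you supply the details it leaves implicit (the inclusion $\mathcal{M}^y\subseteq\mathcal{M}^x$, the identity $\Delta_m(x,y)=|\mathcal{M}^x\setminus\mathcal{M}^y|$, and non-emptiness of the separating set), which is a faithful elaboration rather than a different argument.
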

\begin{proof}
  (4)\,$\Leftrightarrow$\,(10): For all $x,y\in \uL$ with $x\prec y$, the existence of $z$ in Item 4 of Theorem~\ref{theorem-joindistributivity} is equivalent to $\Delta_m(x,y)=1$.

  (10)\,$\Leftrightarrow$\,(11) follows from Lemma~\ref{lem:abuse}. \proofbox
\end{proof}

As it is well-known that a finite lattice is distributive if and only if it is both join-distributive and meetdistributive, we obtain the following necessary and sufficient criterion for distributivity.
\begin{corollary}
  A finite lattice is distributive if and only if for all covering pairs the meet-rises and join-rises are unit.
\end{corollary}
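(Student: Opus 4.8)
The plan is to reduce the claim to the two one-sided rise characterizations already in hand, using the classical fact stated just above the corollary that a finite lattice is distributive exactly when it is both join-distributive and meet-distributive. Concretely, I would first invoke Theorem~\ref{theorem-our-joindistributivity}, whose Item~(10) asserts that $\uL$ is join-distributive if and only if $\Delta_m(x,y)=1$ for every covering pair $x\prec y$. This settles the meet-rise half of the statement directly.

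For the join-rise half, I would appeal to the duality principle, which the text has flagged as applying throughout. Dualizing Item~(10) of Theorem~\ref{theorem-our-joindistributivity} yields that $\uL$ is meet-distributive if and only if every covering pair of $\uL$ has unit join-rise. The one point deserving care here is the bookkeeping of what the dual of the meet-rise is: under dualization, join-irreducibles and meet-irreducibles are interchanged, so the counting functions $m$ and $j$ swap roles, and a covering pair $x\prec y$ in $\uL$ corresponds to the order-reversed covering pair in $\uL^d$. Tracing the definitions of $\mathcal{M}^x$ and $\mathcal{J}_x$ through this reversal shows that the dual meet-rise condition becomes precisely $\Delta_j(x,y)=1$ in $\uL$, so meet-distributivity is equivalent to all join-rises being unit.

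Finally, I would simply conjoin the two equivalences: $\uL$ is distributive iff it is both join-distributive and meet-distributive, iff every covering pair has both unit meet-rise and unit join-rise, which is exactly the claim. I do not expect any genuine obstacle, as the whole argument is a direct combination of Theorem~\ref{theorem-our-joindistributivity} with its dual together with the standard decomposition of distributivity; the only mildly delicate step is confirming that dualization genuinely swaps the roles of $\Delta_m$ and $\Delta_j$ rather than leaving either fixed, which the definitions of $j$ and $m$ make routine to verify.
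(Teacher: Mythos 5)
Your proposal is correct and matches the paper's own (implicit) argument exactly: the paper derives this corollary from Theorem~\ref{theorem-our-joindistributivity} (Item~10: join-distributivity $\Leftrightarrow$ unit meet-rises), its dual (meet-distributivity $\Leftrightarrow$ unit join-rises), and the classical fact that a finite lattice is distributive iff it is both join- and meet-distributive. Your extra care in checking that dualization swaps the roles of $\Delta_m$ and $\Delta_j$ is sound and is precisely what the paper's appeal to the duality principle presupposes.
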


It is also well-known (see Theorem~\ref{theorem-M3N5}) that a lattice is distributive if and only if it does not contain $M_3$ or $N_5$ (as given in Figures~\ref{fig-m3} and \ref{fig-n5}) as sublattice. If it contains $M_3$, then it
is neither join- nor meet-distributive and hence must have both non-unit
join-rises and non-unit meet-rises (see \Cref{fig:m3-m3-rise} for an example). If
it contains $N_5$, then it has a non-unit join-rise or a non-unit meet-rise. %
It is not necessary that it has both of them, as shown for instance for the
lattice $S_7^d$ in \Cref{fig:s7-dual-rise}. We now turn to the natural question of the properties that arise from unit rises
in a single direction. The following theorem offers a characterization in terms
of a sublattice condition.

\begin{figure}[t]
  \centering \includegraphics{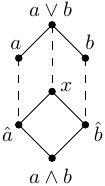}
  \caption{%
    Visualization of the proof of \Cref{theorem:s7-sublattice}. Solid lines indicate a covering relation, while dashed lines indicate comparability. %
  }
  \label{fig:theorem-5}
\end{figure}
\begin{theorem}
  \label{theorem:s7-sublattice}
  Let $\uL$ be a finite, join-distributive lattice. Then $\uL$ is
  meet-distributive if and only if it does not contain $S_7$ as a sublattice.
\end{theorem}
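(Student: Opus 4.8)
The plan is to route both directions through modularity. First I record the crucial reduction: \emph{for a join-distributive $\uL$ the three properties meet-distributive, modular, and distributive coincide}. Indeed, if $\uL$ is join-distributive and modular, then $\uL$ is a modular sublattice of itself and hence distributive by \Cref{def-joindistributive}; distributivity in turn trivially gives modularity and meet-distributivity; and the remaining implication, that a finite join-distributive lattice which is also meet-distributive is distributive, is exactly the well-known fact quoted before the corollary above. I will use this equivalence in both directions.

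The forward implication is then immediate. If $\uL$ is meet-distributive then, being also join-distributive, it is distributive, so by \Cref{theorem-M3N5} it contains neither $M_3$ nor $N_5$ as a sublattice. Since $N_5$ occurs as a sublattice of $S_7$ (readily checked on \Cref{fig:2x2grid}(c)), any occurrence of $S_7$ would produce an $N_5$; hence $\uL$ contains no $S_7$.

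For the converse I argue contrapositively: assuming $\uL$ is join-distributive but \emph{not} meet-distributive, I must exhibit an $S_7$ sublattice. By the reduction $\uL$ is not modular, and since join-distributivity forbids a modular non-distributive sublattice it in particular excludes $M_3$; thus \Cref{theorem-M3N5} furnishes a copy of $N_5$ and $\uL$ is non-distributive. Concretely, dually to \Cref{theorem-our-joindistributivity}(10), the failure of meet-distributivity is witnessed by a covering pair $x \prec y$ with $\Delta_j(x,y) \ge 2$ (join-rises of covers are always at least one). I fix such a pair together with two distinct join-irreducibles $p, q \le y$ satisfying $p, q \not\le x$, so that $x \vee p = x \vee q = y$. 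Choosing $p, q$ minimal forces their unique lower covers to lie below $x$, whence $p \wedge x \prec p$ and $q \wedge x \prec q$; this is the configuration drawn in \Cref{fig:theorem-5}.

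From this data I propose as the seven points of the sought $S_7$ the bottom $o := (p \wedge x) \wedge (q \wedge x)$, the two atoms $p \wedge x$ and $q \wedge x$, their join $(p \wedge x) \vee (q \wedge x)$, the join-irreducibles $p$ and $q$, and the top $y$; the solid and dashed edges of \Cref{fig:theorem-5} record which of the induced relations are covers in $\uL$ and which are mere comparabilities. Semimodularity is the engine for the covers: from $p \wedge x \prec p$ it yields $x \prec x \vee p = y$, and symmetrically for $q$. The heart of the proof, and the step I expect to be the main obstacle, is verifying that these seven elements are pairwise distinct and closed under $\wedge$ and $\vee$, so that they form exactly $S_7$ and do not collapse to $N_5$ or to a diamond $M_3$. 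Several relations are automatic ($p \wedge (q \wedge x) = o = q \wedge (p \wedge x)$, and the joins with $x$ return $y$), but the two identities $p \vee q = y$ and $(p \wedge x) \vee (q \wedge x) = x$, together with the incomparability of $p$ and $q$, are delicate: securing them seems to require a genuinely \emph{minimal} choice of the covering pair $x \prec y$ (e.g.\ of least rank) alongside the minimality of $p, q$, and to use join-distributivity beyond bare semimodularity — namely that every upper interval $[x, x^\top]$ is Boolean (\Cref{theorem-joindistributivity}(6)) and that $M_3$ cannot appear, which is precisely what prevents the configuration from degenerating. Once these identities are in place, matching the remaining entries of the meet/join table of $S_7$ is routine.
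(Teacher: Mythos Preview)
Your forward implication is fine. The converse, however, has a real gap that your caveats do not bridge, and the construction you sketch is \emph{not} the one depicted in \Cref{fig:theorem-5}.

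The central difficulty is that from $\Delta_j(x,y)\ge 2$ you cannot in general extract two \emph{incomparable} minimal join-irreducibles $p,q\in\mathcal{J}_y\setminus\mathcal{J}_x$. Already in $S_7$ itself, if you take $y=\top$ and $x$ one of the two ``outer'' coatoms, then $\mathcal{J}_y\setminus\mathcal{J}_x$ is a two-element chain, so ``choose $p,q$ minimal'' returns a single element; and since all three non-unit join-rises of $S_7$ sit at the same height, your fallback of picking $x$ of least rank does not separate the good cover $(4,\top)$ from the bad ones. Even granting incomparable $p,q$, the identity $(p\wedge x)\vee(q\wedge x)=x$ you flag as ``delicate'' is generally false: $p\wedge x$ and $q\wedge x$ are merely the unique lower covers $p_*,q_*$ of two particular join-irreducibles, and nothing forces their join to exhaust $x$. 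Without that identity your seven-element set is not $\vee$-closed, and replacing $x$ by $w:=p_*\vee q_*$ breaks $p\vee w=y$ instead.

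The paper takes a different and more robust route. Rather than starting from a non-unit join-rise, it first argues that a join-distributive, non-meet-distributive $\uL$ must fail \emph{dual} semimodularity; this directly hands you incomparable $a,b$ with $a\coveredby a\vee b$, $b\coveredby a\vee b$ and $a\wedge b\ncoveredby a$. Gradedness of $\uL$ (a consequence of join-distributivity, \Cref{theorem-joindistributivity}(5)) then supplies $\hat a,\hat b$ with $a\wedge b\coveredby\hat a<a$ and $a\wedge b\coveredby\hat b<b$, and with $x:=\hat a\vee\hat b$ one checks that $\{a\wedge b,\hat a,\hat b,x,a,b,a\vee b\}$ is a sublattice isomorphic to $S_7$. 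The key advantage is that the two covers at the \emph{top} of the prospective $S_7$ are delivered for free by the failure of dual semimodularity, whereas your approach tries to manufacture the configuration from join-irreducibles far below the cover $x\prec y$, where control is much weaker.
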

\begin{proof}``$\Rightarrow$'': If $\uL$ is meet-distributive  then it cannot have a non-meet-distributive sublattice. In particular it cannot have $S_7$ as sublattice, because $S_7$ is not meetdistributive.

  ``$\Leftarrow$'': Let $\uL$ be a finite join-distributive lattice that is not
  meet-distributive. We will construct a sublattice of $\uL$ that is isomorphic
  to $S_7$.
  First, we show that $\uL$ is %
  not dually semimodular: Assume that it were dually semimodular. As dual
  semimodularity together with join-distributivity implies distributivity, and
  thus also meet-distributivity, $\uL$ would also be meet-distributive.
  \Lightning.

  As $\uL$ is not dually semimodular, it does in particular not fulfill the
  weak condition of dual semimodularity since $L$ is finite \cite{roman2008lattices}, which would prohibit the existence of two incomparable
  elements $a,b \in L$ with $a\coveredby a \up b$, $b \coveredby a \up b$ and
  $a\down b \ncoveredby a$. Let $a$ and $b$ be two such elements.
  As $\uL$ is join-distributive it is, by Item 5 of Theorem~\ref{theorem-joindistributivity}, graded. Hence also the interval $[a\wedge b, a\vee b]$ is graded. Because of $a\down b \ncoveredby a$ there exists $\hat{a}$
  with $a\wedge b \coveredby \hat{a} < a$, and because of the gradedness of the
  interval there also exists $\hat{b}$ with $a\wedge b \coveredby \hat{b} < b$.
  Let $x:=\hat{a} \up \hat{b}$. In the remainder of the proof we will show that
  the seven elements $ a \up b, a, b, x, \hat{a}, \hat{b}$ and $a \down b$
  constitute a sublattice of $\uL$ which is isomorphic to $S_7$ (see
  \Cref{fig:theorem-5}):
  \begin{itemize}
    \item Note that $\hat{b} \nleq a$ and $\hat{a} \nleq b$, otherwise $\hat{a} =
            a \down b$ or $\hat{b} = a \down b$.
          Thus, $a\wedge b\leq  \hat{a}\wedge \hat{b} < \hat{a}$, and hence $a\wedge b = \hat{a}\wedge\hat{b}$.
    \item As $\uL$ is join-distributive it is also semimodular. Hence $a\wedge b \coveredby \hat{a}$ and
          $a\wedge b \coveredby \hat{b}$ imply $ \hat{a}\coveredby x$ and
          $\hat{b}\coveredby x$, resp.
    \item We have $x \nleq a$ as otherwise we would have $\hat{b}\leq x \leq a$.
          The same type of argument yields $x \nleq b$.
    \item Furthermore, $x = \hat{a} \up \hat{b} \leq a \up b$. Also, $x \neq a \up
            b$ as otherwise there would be a maximal chain with three elements from $a
            \up b$ to $a \down b$, contradicting the gradedness of the interval
          $[a\wedge b, a\vee b]$.
    \item As $x \leq a \up b$, we know that $a \up x \leq a \up b$. %
          Furthermore, $a \up x = a \up b$, as otherwise $a < a \up x < a \up b$ which
          would contradict $a \coveredby a \up b$. %
          By the same argument $b \up x = a \up b$.
    \item Additionally, $a \down x = \hat{a}$, as otherwise $x > a \down x >
            \hat{a}$, a contradiction to $x \covers \hat{a}$. %
          Analogously, $b \down x = \hat{b}$.      \proofbox
  \end{itemize}
\end{proof}

\section{Join-distributivity in Posets} %

Earlier in this work, we examined join-distributivity in lattices by measuring the number of non-unit rises.
In this section, we transfer this approach to ordered sets.
In data analysis contexts, ordered sets can be considered as condensed representation of the full lattice (via the Dedekind-MacNeille completion). To this end, we first need the notion of join-distributivity in ordered sets.

\begin{definition}
    An ordered set $(P,{\leq})$ is called \emph{join-distributive}, if $DM(P,{\leq})$ is join-distributive. Dually, it is called \emph{meet-distributive} if $DM(P,{\leq})$ is meet-distributive. Finally, it is called \emph{distributive} if it is meet- and join-distributive.
\end{definition}
This definition follows the style of definition of J. Larmerová and J. Rachůnek \cite{larmerova1988translations} via the Dedekind-MacNeille completion, and our definition of a distributive ordered set coincides with theirs. Other properties of lattices such as being modular, complemented, Boolean, pseudo-complemented, Brouwerian and Stone have been adapted to ordered sets \cite{larmerova1988translations,chajda1989forbidden,halas1993pseudocomplemented}, but up to our knowledge join-distributivity is not among them.

Recall, that join- and meet-irreducible elements in ordered sets, are defined the same way as in lattices and that $\mathcal{J}(\uP)$ and $\mathcal{M}(\uP)$ refers to the set of join-irreducible and meet-irreducible elements respectively.
Note that if an ordered set has a smallest element $\bot$, then it is join-reducible as the supremum of the empty set.
On the other hand, in an ordered set with multiple minimal elements, each of them is irreducible.
With keeping this in mind, \emph{join-rises} and \emph{meet-rises in ordered sets} are defined the same way as in lattices.

\cref{fig:dmc-step-numbers} shows two ordered sets and their Dedekind-MacNeille
completions.
The numbers refer to the number of meet-irreducible elements which are larger or equal to each element.
It is apparent that in the examples the numbers are preserved, when the Dedekind-MacNeille
completion is computed, which implies that the rises between elements are
preserved.
The following proposition shows that this is indeed the case. The first part of the proposition is only given because it is used to prove the second part. It seems to be folklore in lattice theory, but we couldn't find it stated explicitly in the literature.
\begin{proposition}
    \label{prop-j=j}
    Let $\uP$ be a finite ordered set.
    \begin{enumerate}
        \item $\mathcal{M}(DM(\uP)) = \{\iota (x) \mid x\in  \mathcal{M}(\uP)\}$
        \item For $x\in P$ the equality $m_{\uP}(x)=m_{DM(\uP)}(\iota (x))$ holds. %
    \end{enumerate}
\end{proposition}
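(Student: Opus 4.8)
The plan is to deduce the main statement~(2) from~(1), so the real content is part~(1). I first identify $DM(\uP)$ with the concept lattice $\uB(P,P,\leq)$ and compute the attribute concept of $p\in P$: its extent is $p'=\{g\in P\mid g\leq p\}=\downset p$ and its intent is $p''=\upset p$, so $\mu(p)=(\downset p,\upset p)=\iota(p)$. Thus $\iota$ sends every element of $P$ to its attribute concept (and, symmetrically, to its object concept) in $DM(\uP)$.

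Granting part~(1), part~(2) is immediate. By Theorem~\ref{theorem:dct}, $\iota$ is an order embedding, hence injective with $x\leq m\iff\iota(x)\leq\iota(m)$. Part~(1) states that the meet-irreducibles of $DM(\uP)$ are exactly the $\iota(m)$ with $m\in\mathcal{M}(\uP)$. Hence $m\mapsto\iota(m)$ is a bijection from $\mathcal{M}^x=\{m\in\mathcal{M}(\uP)\mid x\leq m\}$ onto $\{N\in\mathcal{M}(DM(\uP))\mid\iota(x)\leq N\}$, and comparing cardinalities yields $m_{\uP}(x)=m_{DM(\uP)}(\iota(x))$.

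For part~(1) I would start from the standard facts that the attribute concepts $\{\iota(p)\mid p\in P\}$ are meet-dense in $\uB(P,P,\leq)$ and that, the lattice being finite, every meet-irreducible belongs to every meet-dense set; this gives $\mathcal{M}(DM(\uP))\subseteq\{\iota(p)\mid p\in P\}$. It then suffices to prove, for each $p\in P$, that $\iota(p)$ is meet-irreducible in $DM(\uP)$ iff $p$ is meet-irreducible in $\uP$. One direction is easy: if $p=\bigwedge_{\uP}S$ with $p\notin S$, then $\iota(p)=\bigwedge_{DM(\uP)}\iota(S)$ because $\iota$ preserves existing meets (Theorem~\ref{theorem:dct}), while $\iota(p)\notin\iota(S)$ by injectivity; so $\iota(p)$ is meet-reducible, and contrapositively meet-irreducibility of $\iota(p)$ forces that of $p$.

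The converse is where I expect the main obstacle. In a finite concept lattice an attribute concept $\mu(p)$ is meet-reducible exactly when it equals the meet of the attribute concepts strictly above it, which by the extent computation above reads $\downset p=\bigcap_{q>p}\downset q$; on the poset side $p$ is meet-reducible exactly when $p=\bigwedge_{\uP}\{q\mid q>p\}$. The delicate point is that meet-irreducibility in a general poset is \emph{not} the same as having a unique upper cover: a maximal element that is not the global top has no upper cover yet is meet-irreducible, while the top itself is reducible as $\bigwedge\emptyset$. I would therefore adopt the convention $\bigwedge\emptyset=\top$ and establish the equivalence $\downset p=\bigcap_{q>p}\downset q\iff p=\bigwedge_{\uP}\{q\mid q>p\}$ directly: the inclusion $\downset p\subseteq\bigcap_{q>p}\downset q$ always holds, and since $z\in\bigcap_{q>p}\downset q$ says precisely that $z$ is a lower bound of $\{q\mid q>p\}$, the intersection collapses to $\downset p$ iff $p$ is the greatest such lower bound. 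Carefully matching the boundary cases (no element above $p$, respectively $p=\top$) against this convention closes the correspondence and hence part~(1).
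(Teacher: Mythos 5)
Your proof is correct, and while its overall skeleton parallels the paper's (both first trap the meet-irreducibles of $DM(\uP)$ inside $\iota(P)$, then transfer irreducibility across the embedding, and both derive part~(2) from part~(1) by the same bijection-of-filters argument), the hard direction is handled by a genuinely different argument. The paper proves that every $\iota(x)$ with $x\in\mathcal{M}(\uP)$ is meet-irreducible by contradiction: it assumes $\iota(x)=\bigwedge\hat{K}$ for $\hat{K}\subseteq\mathcal{M}(DM(\uP))$, pulls $\hat{K}$ back to $K\subseteq\mathcal{M}(\uP)$, and uses a maximal lower bound $y$ of $K$ together with $\iota(y)\leq\bigwedge\hat{K}=\iota(x)$ to conclude that $x=\bigwedge K$ in $\uP$, contradicting irreducibility. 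You instead establish a single concrete criterion that captures meet-reducibility on both sides at once: $\iota(p)$ is meet-reducible in $\uB(P,P,\leq)$ iff $\downset p=\bigcap_{q>p}\downset q$, and $p$ is meet-reducible in $\uP$ iff $p=\bigwedge_{\uP}\{q\mid q>p\}$, these two conditions being equivalent because $\bigcap_{q>p}\downset q$ is exactly the set of lower bounds of $\{q\mid q>p\}$. This buys you two things: it avoids the contradiction and the maximal-lower-bound device, and it forces you to treat the boundary cases (maximal elements versus the top, empty meets) explicitly, which the paper's argument only covers implicitly. Your proof also fills a small gap in the paper's write-up of the inclusion ``$\subseteq$'': the paper argues only that elements outside $\iota(P)$ are reducible and then jumps to $\mathcal{M}(DM(\uP))\subseteq\{\iota(x)\mid x\in\mathcal{M}(\uP)\}$, whereas the needed extra step --- that meet-reducibility of $p$ in $\uP$ transfers to $\iota(p)$ via preservation of existing meets (Theorem~\ref{theorem:dct}) and injectivity --- is exactly your ``easy direction.'' The price is that your route is somewhat longer and leans on the FCA identification $\iota(p)=\mu(p)$ and meet-density of attribute concepts, while the paper's pull-back argument is shorter once one accepts its glossed steps.
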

\begin{proof}

    1. `$\subseteq$':
    All elements in $DM(\uP)\setminus \{\iota(x) \mid x\in P\}$ are reducible by the construction of the DM completion, as  they are created as joins and meets of the original elements. Hence $\mathcal{M}(DM(\uP)) \subseteq \{\iota (x) \mid x\in  \mathcal{M}(\uP)\}$.

    1. `$\supseteq$':
    Assume the opposite, i.e., there exists an $x \in \mathcal{M}(P)$ with $\iota(x)\notin \mathcal{M}(DM(\uP))$.
    Then there also exists $\hat{K} \subset \mathcal{M}(DM(\uP))$ with $\iota(x)=\bigwedge \hat{K}$.
    By `$\subseteq$', for each $\hat{k} \in \hat{K}$ there is a $k \in \mathcal{M}(\uP)$ with $\iota(k)=\hat{k}$, and, as $\iota$ is an embedding and therefore injective, $k$ is even unique in $P$.
    Let $K \subset \mathcal{M(\uP)}$ such that $\iota(K) = \hat{K}$.
    As $\iota$ is order-preserving, $x \leq k$ for all $k \in K$.
    Thus, there is some element $y$ that is a maximal lower bound of $K$.
    As $\iota$ is order preserving and $\iota(K)=\hat{K}$, it follows that $\iota(y)\leq \bigwedge\hat{K}=\iota(x)$ and therefore $y \leq x$.
    As $\iota(x)=\bigwedge \hat{K}\leq \iota(k)$ for any $k \in K$, $x$  is a common, lower bound of $K$.
    This makes $x$ the unique greatest lower bound of $K$, i.e., $x = \bigwedge K$, which contradicts the original assumption.

    Ad 2: With Item 1 and the fact that $\iota$ is an order embedding, we obtain $m_{\uP}(x) = |\{y\in\mathcal{M}(\uP) \mid y\geq_{\uP} x\}| = |\{y\in\mathcal{M}(DM(\uP)) \mid y\geq_{DM(\uP)} \iota (x)\}| = m_{DM(\uP)}(\iota (x))$.
    \proofbox

\end{proof}

\begin{figure}[t]
    \centering
    \null\hfill
    \includegraphics{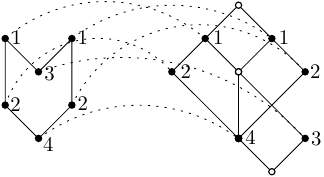}
    \hfill
    \raisebox{4.5em}{
        \begin{cxt}
            \att{$m_1$}
            \att{$m_2$}
            \att{$m_3$}
            \att{$m_4$}
            \att{$m_5$}
            \obj{xbxd.}{$g_1$}
            \obj{bxdx.}{$g_2$}
            \obj{xxbbx}{$g_3$}
            \obj{xxxxb}{$g_4$}
        \end{cxt}}
    \hfill\null\\
    \null\hfill
    \includegraphics{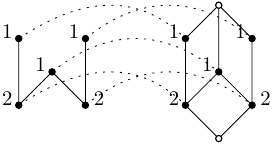}
    \hfill
    \raisebox{4.5em}{
        \begin{cxt}
            \att{$m_1$}
            \att{$m_2$}
            \att{$m_3$}
            \obj{xbu}{$g_1$}
            \obj{ubx}{$g_2$}
            \obj{xxb}{$g_3$}
            \obj{bxx}{$g_4$}
        \end{cxt}}
    \hfill\null
    \caption{%
        Two examples for the
        Dedekind-MacNeille completion, showing that the step numbers stay the same (element wise). %
        Numbers indicate $m(x)$, the white nodes are the ones that are added in the Dedekind-MacNeille completion.
    }
    \label{fig:dmc-step-numbers}
\end{figure}

In the sequel we want to analyze, how unit rises in the ordered set are preserved under the Dedekind-MacNeille completion.

\begin{proposition}
    Let $\uP=(P,\leq)$ be a finite ordered set. If $x\prec y$ in $\uP$ and $\Delta_m(x,y)=1$ then $\iota(x)\prec\iota(y)$ in $DM(\uP)$.

\end{proposition}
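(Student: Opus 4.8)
The plan is to rule out any element of $DM(\uP)$ lying strictly between $\iota(x)$ and $\iota(y)$, by combining the transfer of the meet-count $m$ along $\iota$ (Proposition~\ref{prop-j=j}) with the strict monotonicity of $m$ in a finite lattice. First I would note that $x\prec y$ gives $x<y$, and since $\iota$ is an order embedding (Theorem~\ref{theorem:dct}) it is injective and order-preserving, so $\iota(x)<\iota(y)$ in $DM(\uP)$. By Item~2 of Proposition~\ref{prop-j=j} we have $m_{DM(\uP)}(\iota(x))=m_{\uP}(x)$ and $m_{DM(\uP)}(\iota(y))=m_{\uP}(y)$, so the hypothesis $\Delta_m(x,y)=m_{\uP}(x)-m_{\uP}(y)=1$ translates into $m_{DM(\uP)}(\iota(x))-m_{DM(\uP)}(\iota(y))=1$.

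The key step is the lemma that, in any finite lattice, $m$ is \emph{strictly} decreasing along $<$: if $u<v$ then $m(v)<m(u)$. Writing $\mathcal{M}^u=\{M\in\mathcal{M}(DM(\uP))\mid u\leq M\}$, the inclusion $\mathcal{M}^v\subseteq\mathcal{M}^u$ is immediate from transitivity. For strictness I would use that every element of a finite lattice is the meet of the meet-irreducibles above it, i.e. $u=\bigwedge\mathcal{M}^u$. Since $v\not\leq u=\bigwedge\mathcal{M}^u$, some $M\in\mathcal{M}^u$ satisfies $v\not\leq M$, so $M\in\mathcal{M}^u\setminus\mathcal{M}^v$ and hence $\mathcal{M}^v\subsetneq\mathcal{M}^u$, giving $m(v)<m(u)$. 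This is the step I expect to be the main obstacle to state cleanly, though it reduces to the standard fact about meet-irreducible representations, which I would either cite or prove in one line.

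Finally I would conclude by contradiction. Suppose some $w\in DM(\uP)$ satisfied $\iota(x)<w<\iota(y)$. Applying the strict-monotonicity lemma twice yields $m_{DM(\uP)}(\iota(y))<m_{DM(\uP)}(w)<m_{DM(\uP)}(\iota(x))$, forcing $m_{DM(\uP)}(w)$ to be an integer strictly between the two consecutive integers $m_{DM(\uP)}(\iota(y))$ and $m_{DM(\uP)}(\iota(y))+1=m_{DM(\uP)}(\iota(x))$, which is impossible. Hence no such intermediate $w$ exists, and together with $\iota(x)<\iota(y)$ this gives $\iota(x)\prec\iota(y)$ in $DM(\uP)$, as claimed.
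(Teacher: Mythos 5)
Your proof is correct and follows essentially the same route as the paper's: transfer the unit meet-rise from $\uP$ to $DM(\uP)$ via Proposition~\ref{prop-j=j}(2), then rule out any element strictly between $\iota(x)$ and $\iota(y)$. The only difference is that you explicitly state and prove the strict monotonicity of $m$ in a finite lattice (via the representation of each element as the meet of the meet-irreducibles above it), a step the paper's ``Hence, there cannot exist some $z$'' leaves implicit.
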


\begin{proof}
    Let $x \prec y$  in $(P,\leq)$ with a unit meet-rise. With Proposition~\ref{prop-j=j}(2) follows $\Delta_m(\iota(x),\iota(y))=\Delta_m(x,y)=1$. Hence, there cannot exist some $z\in DM(\uP)$ with  $\iota(x) < z < \iota(y)$.

\end{proof}

To determine if a finite ordered set is join-distributive, one could compute the lattice completion of the ordered set and then check, if there are no non-unit meet-rises.
However, this is potentially expensive, as the number of elements of the Dedekind-MacNeille completion might be exponentially larger than the number of elements in the original ordered set.
We thus propose another way, which can be determined in polynomial time with respect to the number of elements of the ordered set, using the following isomorphism that follows directly from the standard context of a lattice as described in~\cite[Proposition 14]{GanterFormal2024}.

\begin{lemma}
    Let $(P,\leq)$ be a finite ordered set. Then $\uB(J,M,{\leq}\vert_{J \cup M})$ with $J:=\mathcal{J}(\uP)$ and  $M:=\mathcal{M}(\uP)$ is isomorphic to $DM(P,\leq)$.
\end{lemma}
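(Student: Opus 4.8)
The plan is to deduce this lemma as an immediate consequence of the standard-context theorem for complete lattices, i.e.\ the cited \cite[Proposition 14]{GanterFormal2024}, which states that for any complete lattice $\uL$ the \emph{standard context} $(\mathcal{J}(\uL),\mathcal{M}(\uL),{\leq})$ has concept lattice isomorphic to $\uL$. Since $DM(\uP)$ is a complete lattice (Theorem~\ref{theorem:dct}), applying this result to $\uL:=DM(\uP)$ directly gives
\[
  \uB\bigl(\mathcal{J}(DM(\uP)),\mathcal{M}(DM(\uP)),{\leq}\bigr)\;\cong\;DM(\uP).
\]
All that then remains is to identify the objects, attributes, and incidence of this standard context with those of $(J,M,{\leq}\vert_{J\cup M})$ via the embedding $\iota$.

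For the attributes, Proposition~\ref{prop-j=j}(1) yields exactly $\mathcal{M}(DM(\uP))=\{\iota(x)\mid x\in\mathcal{M}(\uP)\}=\iota(M)$. For the objects I would invoke the dual of Proposition~\ref{prop-j=j}(1), namely $\mathcal{J}(DM(\uP))=\iota(J)$, which holds by the duality principle of lattice theory. Thus $\iota$ restricts to a bijection of $J$ onto the join-irreducibles and of $M$ onto the meet-irreducibles of the completion. Finally, because $\iota$ is an order embedding, for $j\in J$ and $m\in M$ we have $j\leq m$ in $\uP$ if and only if $\iota(j)\leq\iota(m)$ in $DM(\uP)$; hence $\iota$ is an isomorphism of formal contexts from $(J,M,{\leq}\vert_{J\cup M})$ onto the standard context $(\mathcal{J}(DM(\uP)),\mathcal{M}(DM(\uP)),{\leq})$. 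Since isomorphic contexts have isomorphic concept lattices, chaining this with the displayed isomorphism gives $\uB(J,M,{\leq}\vert_{J\cup M})\cong DM(\uP)$, as claimed.

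I do not expect a substantial obstacle, as the argument is essentially bookkeeping on top of two ingredients we may assume: the standard-context theorem and Proposition~\ref{prop-j=j}. The one point deserving care is justifying the dual statement $\mathcal{J}(DM(\uP))=\iota(J)$: one should note that the Dedekind--MacNeille completion commutes with order duality, i.e.\ $DM(\uP^{d})=(DM(\uP))^{d}$ with $\iota$ intertwining the two constructions, so that the duality principle legitimately transports Proposition~\ref{prop-j=j}(1) from meet-irreducibles to join-irreducibles.
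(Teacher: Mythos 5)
Your proof is correct and takes essentially the same route as the paper: the lemma is obtained by applying the standard-context result \cite[Proposition 14]{GanterFormal2024} to the finite complete lattice $DM(\uP)$ and identifying its irreducibles with $\iota(J)$ and $\iota(M)$ via Proposition~\ref{prop-j=j}(1) and its dual, with the incidence matching because $\iota$ is an order embedding. Your explicit care about the dual statement (that the Dedekind--MacNeille completion commutes with order duality) fills in a detail that the paper leaves implicit, but it is the intended argument.
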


We can thus compute the context $(J,M,{\leq}\vert_{J \cup M})$ and then use the equivalence of (1)\ and (9)\ in \cref{theorem-joindistributivity} to determine join-distributivity.
As this context is clarified by definition, we only have to check if every row that has a doublearrow has no additional downarrow or doublearrow.
Computing the arrow relations can, by their definition, be done in polynomial time (\eg, by checking their definition for each cell), as the necessary inclusion checks between attribute and object vectors are at most quadratic in number, respectively.
We thus receive a polynomial-time check for join-distributivity of arbitrary finite ordered sets.
As example, the right side of \cref{fig:dmc-step-numbers} shows the formal contexts $(J,M,{\leq}\vert_{J \cup M})$ and their arrow relations.
For the upper ordered set, $g_1, g_2$ and $g_3$ have at least one doublearrow and an additional downarrow or doublearrow, thus, the ordered set is not join-distributive.
On the other hand there is no attribute with a doublearrow and an additional uparrow or doublearrow which makes the ordered set meet-distributive.
The opposite is true for the lower ordered set, which is not meet- but join-distributive.

\section{Local distributivity in real-world data}
\label{sec-realworld}

Join- and meet-rises provide us with a straightforward way of measuring
non-distributivity in real-world (concept) lattices:%
\footnote{There are many other, equally interesting ways for measuring
  non-distributivity, which we do not discuss here. Examples include
  $|\{(x,y,z)\in\uL^3 \mid x\vee(y\wedge z) \neq (x\vee y)\wedge (x\vee z\}|$
  (and variations thereof which only count the non-trivial tuples) and
  $|\{a\in\uL \mid \exists x,y\in\uL\colon a\wedge x = a\wedge y, a\vee y =
    a\vee y, x\neq y\}|$.}
\begin{definition}
  Let $\uL$ be a finite lattice. Then $nur_\wedge(\uL):=|\{(x,y)\in\uL^2 \mid
    x\prec y, \Delta_m (x,y) > 1 \}|$ is the \emph{absolute number of
    non-unit meet-rises}. When normalising this value by dividing it by
  $\mid\prec\mid$ (the number of covering pairs), one obtains the \emph{relative
    number of non-unit meet-rises}. We will abuse the notation by using
  $nur_\wedge(\uL)$ also for the latter, whenever this is clear from the
  context. $nur_\vee(\uL)$, the \emph{(absolute and relative) number of non-unit
    join-rises} is defined dually.
\end{definition}
The following Lemma follows directly from Theorem~\ref{theorem-our-joindistributivity}. It summarizes the relationship between the $nur_\wedge(\uL)$ and the $nur_\vee(\uL)$ and the different forms of distributivity.

\begin{lemma}
  A finite lattice $\uL$ is
  \begin{itemize}
    \item join-distributive iff $nur_\wedge(\uL) = 0\,\%$,
    \item meet-distributive iff $nur_\vee(\uL) = 0\,\%$, and
    \item distributive iff $nur_\wedge(\uL) = nur_\vee(\uL) = 0\,\%$.
  \end{itemize}
\end{lemma}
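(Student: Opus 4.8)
The plan is to unfold the definition of $nur_\wedge$ and reduce each claim to condition~(10) of \cref{theorem-our-joindistributivity} and its dual. By definition, $nur_\wedge(\uL)=0$ holds precisely when no covering pair $x\prec y$ satisfies $\Delta_m(x,y)>1$, i.e.\ when every covering pair satisfies $\Delta_m(x,y)\leq 1$. I want to match this against condition~(10), which states that $\uL$ is join-distributive if and only if every covering pair satisfies $\Delta_m(x,y)=1$.

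The one gap to close is the case $\Delta_m(x,y)=0$, so the first real step is to show that $\Delta_m(x,y)\geq 1$ for \emph{every} covering pair of a finite lattice. I would argue this from the standard fact that every element of a finite lattice is the meet of the meet-irreducibles above it, i.e.\ $x=\bigwedge\mathcal{M}^x$ and $y=\bigwedge\mathcal{M}^y$. Since $x<y$ gives $\mathcal{M}^y\subseteq\mathcal{M}^x$, and equality would force $x=\bigwedge\mathcal{M}^x=\bigwedge\mathcal{M}^y=y$, we must have $\mathcal{M}^y\subsetneq\mathcal{M}^x$ and therefore $m(x)>m(y)$, that is $\Delta_m(x,y)\geq 1$. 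With this inequality in hand, ``never $\Delta_m>1$'' and ``always $\Delta_m=1$'' coincide, so $nur_\wedge(\uL)=0$ is equivalent to join-distributivity by condition~(10).

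The meet-distributive statement then follows by the dual argument: the symmetric reasoning with join-irreducibles yields $\Delta_j(x,y)\geq 1$ on every covering pair, hence $nur_\vee(\uL)=0$ if and only if every covering pair has $\Delta_j(x,y)=1$, which is meet-distributivity by the dual of condition~(10). For the distributivity claim I would invoke the well-known fact (already used for the preceding corollary) that a finite lattice is distributive if and only if it is simultaneously join- and meet-distributive, so the conjunction $nur_\wedge(\uL)=nur_\vee(\uL)=0$ characterizes distributivity.

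It remains to reconcile the absolute and relative counts: as the relative value is the absolute one divided by the positive number $\abs{\prec}$ of covering pairs, the two vanish together whenever $\abs{\prec}>0$, and the only degenerate case is the one-element lattice, where $\abs{\prec}=0$ and all three properties hold vacuously in agreement with $nur_\wedge=nur_\vee=0$. The main (indeed only) obstacle is the inequality $\Delta_m(x,y)\geq 1$; everything else is a direct transcription of condition~(10) and its dual.
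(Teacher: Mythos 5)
Your proof is correct and follows essentially the same route as the paper: the paper offers no written argument at all, stating only that the lemma ``follows directly'' from Theorem~\ref{theorem-our-joindistributivity}, i.e.\ precisely the reduction to condition~(10) and its dual that you carry out. The one step you add --- proving $\Delta_m(x,y)\geq 1$ (and dually $\Delta_j(x,y)\geq 1$) on covering pairs from the fact that every element of a finite lattice is the meet of the meet-irreducibles above it --- is exactly the folklore observation the paper leaves implicit, and it is genuinely needed, since $nur_\wedge(\uL)=0$ only forbids rises greater than one while condition~(10) demands that every rise equal one; identifying and closing that gap (including the degenerate one-element case for the relative count) is the right way to make the paper's ``directly'' rigorous.
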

The lattice is thus `100\,\% distributive' if the relative frequency of the meet- and join-rises that are non-unit is 0\,\%.
The more these values deviate from 0\,\%, the `less distributive' is the lattice.

In \cite{WilleTruncated2003}, R. Wille observed that many real-world concept lattices are (mostly) distributive at the top because ``everyday thinking predominantly uses logical inferences with one-element premises'', and then abruptly break down at the bottom because of conflicting attributes. While this led Wille to introduce truncated distributive lattices, it inspired us to the following assumption:

\medskip\noindent
\textbf{Assumption A1.}
Concept lattices of real-world contexts will show a higher degree of join-distributivity than of meet-distributivity.
\medskip

If this assumption holds we would expect that $nur_\wedge(\uL)\leq
  nur_\vee(\uL)$ in most cases. To this end, we have analysed the non-unit rises
of all concept lattices of the FCA context
repository\footnote{\url{https://github.com/fcatools/contexts/tree/main/contexts}
  (visited March 25, 2025)} \cite{hanika2024repository} (top part of
Table~\ref{table-realworld}) and of some further datasets (bottom part of the
table). The datasets were not preprocessed and all analysis was conducted using
custom Python code. In the table, $\uL$ is the concept lattice $\uB(\K)$ of the
context at hand, and $\mid\prec\mid$ gives the number of covering pairs in $\uL$.

\begin{table}[htbp]
	\centering
	\caption{$nur$-values for some real-world datasets}
	\label{table-realworld}
	\begin{tabular}{x{.75pt}lx{.75pt}rx{.05pt}rx{.75pt}rx{.1pt}rx{.75pt}rx{.1pt}rx{.75pt}}
		\toprule
		\multirowcell{3}{$\K$}
		                                                                    & \multirowcell{3}{$|\uL|$}
		                                                                    & \multirowcell{3}{$|\prec|$} &
		\multicolumn{2}{cx{.75pt}}{$nur_\vee(\uL)$}                         &

		\multicolumn{2}{cx{.75pt}}{$nur_\wedge(\uL)$}
		\\
		                                                                    &                             &               & abs.          & rel.   & abs.       & rel.   \\
		\midrule
		officesupplies                                                      & 5                           & 5             & 1             & 0.20   & 1          & 0.20   \\
		newzealand                                                          & 8                           & 10            & 0             & 0.00   & 0          & 0.00   \\
		planets                                                             & 12                          & 18            & 5             & 0.28   & 5          & 0.28   \\
		bodiesofwater                                                       & 12                          & 18            & 6             & 0.33   & 1          & 0.06   \\
		famous\_animals                                                     & 13                          & 21            & 6             & 0.29   & 5          & 0.24   \\
		missmarple                                                          & 13                          & 21            & 12            & 0.57   & 5          & 0.24   \\
		livingbeings                                                        & 19                          & 32            & 11            & 0.34   & 8          & 0.25   \\
		driveconcepts                                                       & 24                          & 50            & 2             & 0.04   & 12         & 0.24   \\
		gewässer                                                            & 28                          & 62            & 30            & 0.48   & 8          & 0.13   \\
		animals                                                             & 35                          & 66            & 33            & 0.50   & 20         & 0.30   \\
		tealady                                                             & 65                          & 148           & 48            & 0.32   & 31         & 0.21   \\
		music                                                               & 163                         & 507           & 311           & 0.61   & 56         & 0.11   \\
		seasoningplanner \quad                                              & 532                         & 1593          & 784           & 0.49   & 563        & 0.35   \\
		\hline
		zoo \cite{zoo_111}                                                  & 4\,579                      & 19\,076       & 12\,228       & 0.64   & 3\,451       & 0.18   \\
		wikipedia \cite{kunegis2013konect}                                  & 14\,171                     & 70\,732       & 53\,342       & 0.75   & 6\,161       & 0.09   \\
		students \cite{students}                                            & 17\,603                     & 80\,902       & 67\,439       & 0.83   & 5\,713       & 0.07   \\
		wiki44k \cite{DBLP:conf/icfca/Hanika0S19,DBLP:conf/semweb/HoSGKW18} & 21\,923                     & 109\,698      & 91\,924       & 0.84   & 4\,526       & 0.04   \\
		mushroom \cite{mushroom_73}                                         & \ 238\,710                  & \ 1\,370\,991 & \ 1\,318\,000 & \ 0.96 & \ 102\,519 & \ 0.07 \\
		\bottomrule
	\end{tabular}
\end{table}

First of all, we observe that the only distributive concept lattice in this collection is the one about New Zealand (which is isomorphic to $\{0,1,2,3\}\times\{0,1\}$), as it is the only one with $nur_\vee(\uL)=nur_\wedge(\uL)=0$. Then we observe that for all datasets (except the drive concepts), we have indeed $nur_\wedge(\uL)\leq nur_\vee(\uL)$, supporting our assumption.
We assume that the concept lattice on drive concepts behaves differently because it has an atypcial small number of objects (5) for a comparatively large number (25) of attributes.

The larger the lattice, the more pronounced is the effect postulated in A1, as can be seen in Figure~\ref{fig:scatter}. With increasing size of the covering relation, the relative number of non-unit meet-rises is (with very few exceptions) decreasing, while the relative number of non-unit join-rises is with almost no exceptions increasing.
The extreme case is the mushroom dataset, whose concept lattice is `96\,\% non-meet-distributive' but only `7\,\% non-join-distributive'.
\begin{figure}[htbp]
  \centering
  \includegraphics[width=0.7\textwidth]{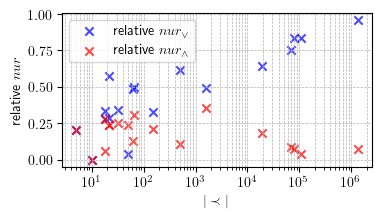}
  \caption{The plot shows for every lattice its relative $nur_\vee$ (red) and $nur_\wedge$ (blue) values in relation to the size of its covering relation.
  }
  \label{fig:scatter}
\end{figure}

Coming back to Wille's observation that distributivity breaks down at the bottom of a concept lattice because of inconsistent attribute combinations, we assume that the bottom element of the lattice is particularly prone to generate non-unit meet-rises.
Therefore, we analyse the count of non-unit meet-rises on the lowest covering pairs (\ie, those between the bottom element of the lattice and the atoms) in the first row of Table~\ref{table-atoms}. This  shows that the concept lattices of real-world datasets exhibit almost exclusively non-unit meet-rises at the very bottom. This comes with no surprise, as a unit-rise would only occur if the object that is generating the atom concept at hand has (up to purification) exactly all but one attribute of the formal context which is rarely the case.
\begin{table}[htbp]
	\centering
	\caption{Relative non-unit rises on the atoms and co-atoms of the concept lattice}
	\label{table-atoms}
	\begin{tabular}{lcccccccccccccccccc}
		\toprule
		                        & \rotatebox{90}{officesupplies} & \rotatebox{90}{newzealand} & \rotatebox{90}{planets}    & \rotatebox{90}{bodiesofwater} & \rotatebox{90}{famous\_animals} & \rotatebox{90}{missmarple} & \rotatebox{90}{livingbeings} & \rotatebox{90}{driveconcepts} & \rotatebox{90}{gewässer}   & \rotatebox{90}{animals}    & \rotatebox{90}{tealady}    & \rotatebox{90}{music}      & \rule{0pt}{2.6ex} \rotatebox{90}{seasoningplanner\hspace*{1ex}} \rule[-1.2ex]{0pt}{0pt}& \rotatebox{90}{zoo}          & \rotatebox{90}{wikipedia}      & \rotatebox{90}{students}       & \rotatebox{90}{wiki44k}        & \rotatebox{90}{mushroom}         \\
		\midrule
\rule{0pt}{4ex}		$nur_\wedge(\uL)/$atoms & $\displaystyle\frac{1}{2}$     & $\displaystyle\frac{0}{2}$ & $\displaystyle\frac{5}{5}$ & $\displaystyle\frac{1}{2}$    & $\displaystyle\frac{5}{5}$      & $\displaystyle\frac{5}{5}$ & $\displaystyle\frac{4}{4}$   & $\displaystyle\frac{4}{5}$    & $\displaystyle\frac{8}{8}$ & $\displaystyle\frac{8}{8}$ & $\displaystyle\frac{7}{7}$ & $\displaystyle\frac{5}{6}$ & $\displaystyle\frac{38}{38}$     & $\displaystyle\frac{59}{59}$ & $\displaystyle\frac{103}{103}$ & $\displaystyle\frac{479}{479}$ & $\displaystyle\frac{587}{587}$ & $\displaystyle\frac{8124}{8124}$
\rule[-3ex]{0pt}{0pt}
		\\ \midrule
\rule{0pt}{4ex}		$nur_\vee(\uL)/$coatoms\hspace*{1cm} & $\displaystyle\frac{1}{2}$     & $\displaystyle\frac{0}{2}$ & $\displaystyle\frac{1}{2}$ & $\displaystyle\frac{2}{3}$    & $\displaystyle\frac{1}{2}$      & $\displaystyle\frac{4}{4}$ & $\displaystyle\frac{4}{4}$   & $\displaystyle\frac{1}{4}$    & $\displaystyle\frac{6}{6}$ & $\displaystyle\frac{5}{5}$ & $\displaystyle\frac{7}{7}$ & $\displaystyle\frac{7}{7}$ & $\displaystyle\frac{29}{29}$     & $\displaystyle\frac{17}{17}$ & $\displaystyle\frac{68}{68}$   & $\displaystyle\frac{27}{27}$   & $\displaystyle\frac{65}{65}$   & $\displaystyle\frac{23}{23}$    
\rule[-3ex]{0pt}{0pt}
		 \\				
		\bottomrule
	\end{tabular}
\end{table}

The same observation holds for the join-rises between the top-element and the
co-atoms of the lattice, as shown in the second row of Table~\ref{table-atoms}.
For the ten largest of our datasets (from \emph{gewässer} to \emph{mushroom}),
all join-rises between co-atoms and top element are non-unit. This also comes
with no real surprise, as a unit join-rise would mean that the attribute that is
generating the co-atom under study would match with all objects beside exactly
one. However, this effect does not contribute as much to the total count of
non-unit join-rises compared to the effect of the atoms to the total count of
non-unit meet-rises, as real-world concept lattices tend to be leaner at the
top: the wiki44k dataset, for instance, has 587 atoms but only 65 co-atoms, and
the mushroom dataset has 8\,124 atoms but only 23 co-atoms.

Further insight into the distribution of the non-unit meet-rises is provided by
Figure~\ref{fig:heightvsnur}. There we have plotted for all lattices with height
10 or more the distribution of $nur_\wedge$ over the height of the covering
pairs in the lattice. (The plots for the smaller lattices look very similar.)
The left-most point of each curve corresponds to the 100\,\% entries in
Table~\ref{table-atoms}. We observe that the values are then almost everywhere
decreasing, which means that the non-join-distributivity in these lattices
accumulates towards the bottom of the concept lattice. An extreme case is the
mushroom dataset, which is `join-distributive almost everywhere' with the
exception of the atoms of the lattice.
\begin{figure}[htbp]
  \subfloat[mushroom]{\includegraphics[width=0.5\textwidth]{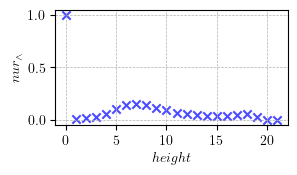} }
  \subfloat[seasoningplanner]{\includegraphics[width=0.5\textwidth]{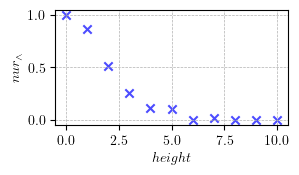} }\\[1ex]
  \subfloat[students]{\includegraphics[width=0.5\textwidth]{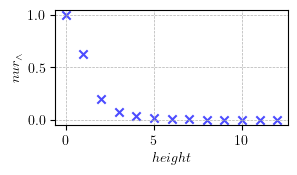} }
  \subfloat[wiki44k]{\includegraphics[width=0.5\textwidth]{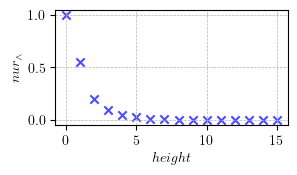} }\\[1ex]
  \subfloat[wikipedia]{\includegraphics[width=0.5\textwidth]{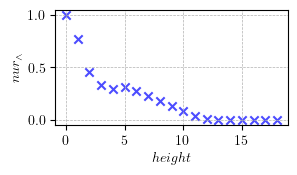} }
  \subfloat[zoo]{\includegraphics[width=0.5\textwidth]{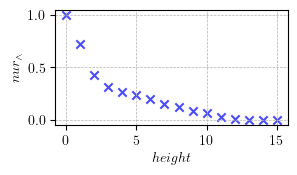} }
  \caption{Distribution of $nur_\wedge$ over the height for all lattices with height $\geq 10$.}
  \label{fig:heightvsnur}

\end{figure}

\section{Conclusion and Future Work}
\label{sec-conclusion}

In this paper, we have introduced non-unit rises as indicator and measures for the degree of the presence of local distributivity in (concept) lattices and ordered sets. This opens various lines of research that may at the end lead to algebraic methods for analysing large real-world concept lattices: How can the failure of local distributivity be repaired? How can the result be used for novel drawing algorithms for concept lattices? How can it be exploited for complexity reduction by means of congruence and tolerance relations? Last but not least it is an interesting open question how (join)distributive ordered sets can be defined directly, without the detour via the Dedekind-MacNeille completion.

\paragraph{Acknowledgment.} We thank the anonymous reviewers for their valuable comments.

\bibliographystyle{splncs04}
\bibliography{paper}

\end{document}